\documentclass[runningheads]{llncs}
\usepackage[preprint,year=2026]{eccv}
\usepackage{eccvabbrv}
\usepackage{graphicx}
\usepackage{booktabs}
\usepackage[accsupp]{axessibility}  
\usepackage{multirow}
\usepackage{pifont}
\usepackage{makecell}
\usepackage{caption}
\usepackage[table]{xcolor}
\definecolor{colBlueLight}{RGB}{225,238,255}
\definecolor{colBlue}{RGB}{180,210,255}
\definecolor{colGreenLight}{RGB}{225,245,230}
\definecolor{colGreen}{RGB}{185,230,195}
\usepackage{subcaption}
\usepackage{algorithm}
\usepackage{algpseudocode}
\usepackage{placeins}
\usepackage[pagebackref,breaklinks,colorlinks,citecolor=eccvblue]{hyperref}
\usepackage{orcidlink}

\begin{document}
\title{FD$^2$: A Dedicated Framework for Fine-Grained Dataset Distillation}

\titlerunning{Fine-Grained Dataset Distillation}

\author{
Hongxu Ma\inst{1}$^\dagger$ \and
Guang Li\inst{2}$^\dagger$\textsuperscript{*} \and
Shijie Wang\inst{3} \and
Dongzhan Zhou\inst{4} \and
Baoli Sun\inst{5} \and
Takahiro Ogawa\inst{2} \and
Miki Haseyama\inst{2} \and
Zhihui Wang\inst{5}\textsuperscript{*}
}


\authorrunning{H.~Ma and G.~Li et al.}

\institute{
Zhejiang University \and
Hokkaido University \and
The University of Queensland \and
Shanghai AI Laboratory \and
Dalian University of Technology \\
$^\dagger$ Equal Contribution, 
\textsuperscript{*}Corresponding Authors: \\ \texttt{guang@lmd.ist.hokudai.ac.jp}, \texttt{zhihuiwang@dlut.edu.cn}
}

\maketitle

\begin{abstract}
Dataset distillation (DD) compresses a large training set into a small synthetic set, reducing storage and training cost, and has shown strong results on general benchmarks. Decoupled DD further improves efficiency by splitting the pipeline into pretraining, sample distillation, and soft-label generation.
However, existing decoupled methods largely rely on coarse class-label supervision and optimize samples within each class in a nearly identical manner. On fine-grained datasets, this often yields distilled samples that \textbf{(i)} retain large intra-class variation with subtle inter-class differences and \textbf{(ii)} become overly similar within the same class, limiting localized discriminative cues and hurting recognition.
To solve the above-mentioned problems, we propose \textbf{FD$^{2}$}, a dedicated framework for \textbf{F}ine-grained \textbf{D}ataset \textbf{D}istillation. FD$^{2}$ localizes discriminative regions and constructs fine-grained representations for distillation. During pretraining, counterfactual attention learning aggregates discriminative representations to update class prototypes. During distillation, a fine-grained characteristic constraint aligns each sample with its class prototype while repelling others, and a similarity constraint diversifies attention across same-class samples.
Experiments on multiple fine-grained and general datasets show that FD$^{2}$ integrates seamlessly with decoupled DD and improves performance in most settings, indicating strong transferability. Code is available at \href{https://github.com/Guang000/FD2}{https://github.com/Guang000/FD2}.

\keywords{Dataset Distillation \and Fine-grained \and Decoupled Distillation}
\end{abstract}

\begin{figure}[t]
  \centering

  \begin{subfigure}[t]{\linewidth}
    \centering
    \includegraphics[width=\linewidth]{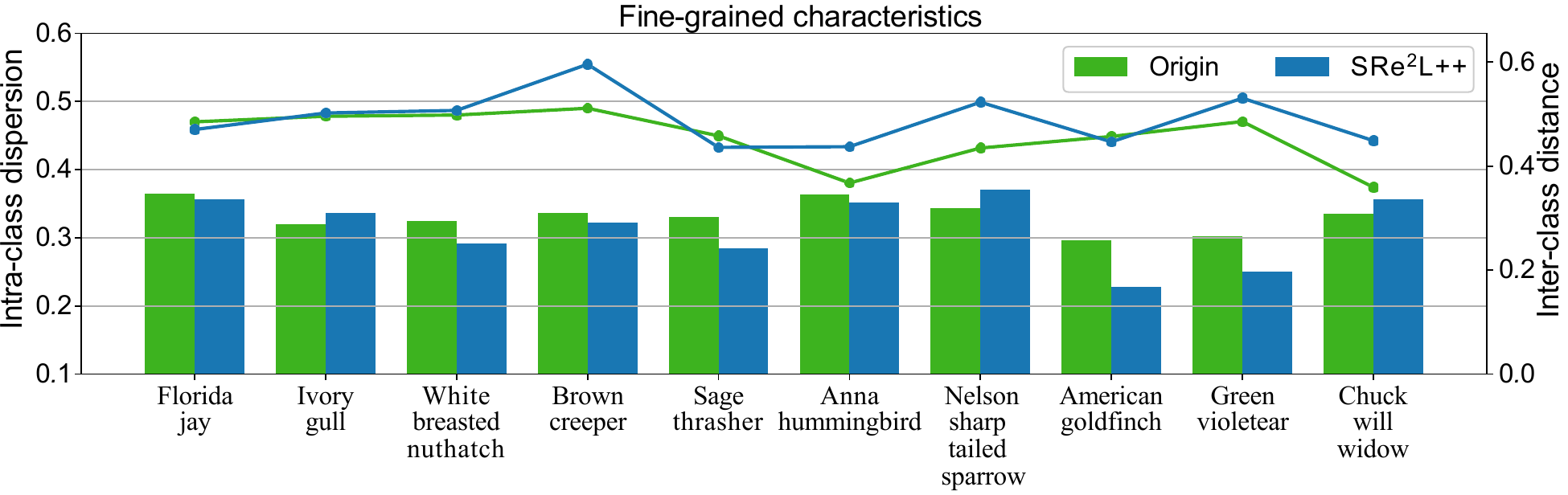}
    \caption{}
    \label{fig:fg_full_vs_sre2l}
  \end{subfigure}

  \vspace{0pt}

  \begin{subfigure}[t]{\linewidth}
    \centering
    \includegraphics[width=\linewidth]{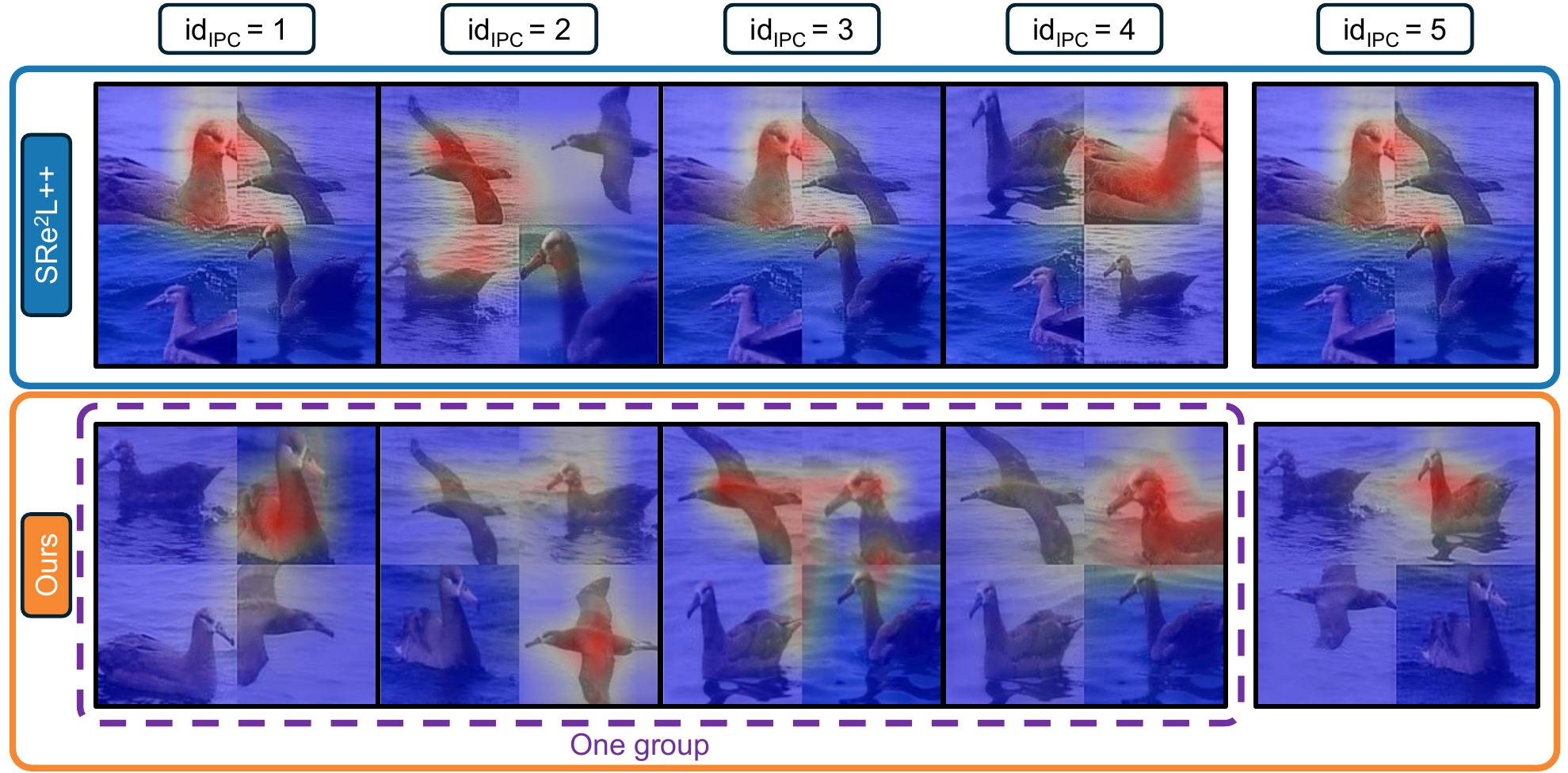}
    \caption{}
    \label{fig:attn_sre2l_vs_ours}
  \end{subfigure}

  \caption{(a) On CUB-200-2011, 10 classes are randomly sampled to compare the original training set and the SRe$^{2}$L++ distilled set, where intra-class dispersion (bars) and inter-class distance (line) are reported. (b) Attention heatmaps of \emph{Black-footed Albatross} in CUB-200-2011, comparing the attended regions of distilled samples produced by SRe$^{2}$L++ and FD$^{2}$.}
  \label{fig:fg_and_attn}
\end{figure}

\section{Introduction}
Recently, the rapid growth of large-scale datasets has driven substantial progress in artificial intelligence, but it also brings high computational cost and long training time \cite{ImageNet,ViT,ResNet}. Dataset distillation (DD) addresses this challenge by learning a compact synthetic dataset that preserves the training utility of the original data, such that a model trained on the distilled set achieves performance close to training on the full dataset while using far fewer resources \cite{DC-BENCH,li2020soft,DD,DC,li2022awesome, li2022compressed}.
Existing DD methods can be broadly grouped into matching-based approaches, \eg, gradient matching \cite{DC,DSA}, distribution matching \cite{DM, CAFE, li2025hdd}, trajectory matching \cite{MTT, TESLA, li2024dataset, li2024iadd}, decoupled methods \cite{SRe2L, CVDD, FADRM}, and generative methods \cite{ITGAN, li2024generative, Minimax, su2024diffusion, li2025diff, zou2025dataset}. Most of these techniques are evaluated on general-purpose benchmarks, including ImageNette and ImageWoof \cite{ImageNet}, where they have shown promising performance.
Among these, decoupled methods \cite{SRe2L, CVDD, FADRM} split dataset distillation into three stages: model pretraining, sample distillation, and soft-label generation, which significantly improves efficiency while preserving competitive accuracy on general benchmarks.
However, when directly applied to fine-grained datasets (\eg, CUB-200-2011), existing decoupled methods suffer from two limitations: \textbf{(i)} the distilled set preserves unfavorable fine-grained characteristics, and \textbf{(ii)} distilled samples within the same class become overly similar.

For \textbf{(i)}, \cref{fig:fg_full_vs_sre2l} visualizes 10 randomly selected CUB-200-2011 classes, where the original data exhibit large intra-class variation and subtle inter-class differences, which we term the fine-grained characteristic. This characteristic degrades the model’s ability to extract discriminative cues, thereby limiting fine-grained recognition performance. A representative decoupled method, SRe$^2$L++ \cite{CVDD}, largely inherits this unfavorable characteristic in the distilled set, and in some classes, the characteristic is even worse than that of the full dataset, making it difficult for the student model to learn discriminative representations. We attribute this to the fact that decoupled DD relies primarily on coarse class-label supervision during distillation: it enforces class-level semantic consistency but does not explicitly encourage intra-class compactness or inter-class separability at the fine-grained level.
For \textbf{(ii)}, the attention heatmaps in \cref{fig:attn_sre2l_vs_ours} show that SRe$^2$L++ attends to highly consistent regions across same-class distilled samples, which reduces coverage of discriminative parts and weakens the diversity of discriminative cues available for the student model. This behavior is induced by sample-wise iterative synthesis under nearly identical optimization across iterations, which drives the same-class samples to converge to similar solutions.
Together, these issues limit the quality of fine-grained distilled data and consequently degrade the student model’s performance on fine-grained recognition.

To address these issues, we propose \textbf{FD$^{2}$}, a dedicated framework for \textbf{F}ine-grained \textbf{D}ataset \textbf{D}istillation that augments decoupled DD with fine-grained supervision. FD$^{2}$ leverages Counterfactual Attention Learning (CAL) to extract attention maps that localize discriminative regions and to construct class prototypes summarizing these representations. These attention-based signals provide fine-grained supervision for the subsequent distillation process.
During pretraining, CAL is used to obtain attention maps and update class prototypes that capture discriminative cues. During distillation, the attention maps are used to localize discriminative regions and are fused with backbone features to form fine-grained representations. Based on these representations, we introduce two complementary constraints. A \emph{fine-grained characteristic constraint} pulls each distilled sample toward its target-class prototype while pushing it away from other-class prototypes, improving intra-class compactness and inter-class separability. A \emph{similarity constraint} encourages diversity among same-class distilled samples by separating their attention distributions, enabling coverage of different discriminative regions. 
Both constraints are incorporated into existing distillation objectives as plug-in terms, introducing minimal overhead while preserving the original decoupled distillation pipeline. Experiments on multiple fine-grained and general datasets demonstrate that FD$^{2}$ consistently improves representative decoupled distillation methods.

The main contributions of this paper are summarized as follows:
\begin{itemize}
    \item We propose FD$^{2}$, a dedicated framework for fine-grained dataset distillation that augments decoupled DD with fine-grained supervision. The proposed method improves the structure of the distilled dataset by simultaneously enhancing inter-class separability and promoting within-class diversity.
    \item We introduce a fine-grained characteristic constraint to improve intra-class compactness and inter-class separability, and a similarity constraint to diversify attention across same-class distilled samples, thereby providing richer localized discriminative cues.
    \item Extensive experiments on multiple fine-grained and general datasets demonstrate that FD$^{2}$ can be integrated into representative decoupled methods without modification of their workflow and achieves consistent improvements in most settings, with particularly strong gains on fine-grained datasets.
\end{itemize}

\section{Related Work}
\label{sec:related}

\paragraph{Dataset Distillation.}
Dataset distillation (DD) aims to synthesize a compact set of training samples from a large dataset, substantially reducing storage and training cost while retaining performance close to training on the full data. Existing DD methods can be broadly categorized into gradient matching \cite{DC,DSA,DCC,DREAM}, distribution matching \cite{DM,CAFE,HaBa,DataDAM,GUARD,li2025davdd}, trajectory matching \cite{MTT,TESLA,APM,DATM}, decoupled distillation \cite{SRe2L,FADRM,GVBSM,EDC,CDA,LPLD}, and generative distillation \cite{ITGAN,HPD,Minimax,D4M,wu2025dc3,ye2025igds,li2025diffusion,cai2026evlf}.

\paragraph{Decoupled Dataset Distillation.}
SRe$^2$L \cite{SRe2L} pioneers decoupled distillation by separating model pretraining from sample distillation. SRe$^2$L++ \cite{CVDD} improves robustness via stronger augmentation and batch-specific soft labels. FADRM \cite{FADRM} enriches pixel-space information with multi-scale residual connections (with FADRM+ extending to multi-model distillation). G-VBSM \cite{GVBSM} enhances generalization across multiple models, while CDA \cite{CDA} stabilizes optimization through a curriculum strategy. LPLD \cite{LPLD} revisits the necessity of large-scale soft labels and proposes lighter-weight alternatives. Despite their effectiveness on general-purpose benchmarks, these methods are not designed with fine-grained datasets in mind and do not explicitly address fine-grained separability or within-class diversity in the distilled set.

\paragraph{Fine-grained Image Recognition.}
Fine-grained image recognition (FGIR) often relies on localized discriminative regions, motivating methods that mine and aggregate part-level cues. Cross-X \cite{Cross-X} exploits cross-layer interaction with cross-category constraints. PMG \cite{PMG} learns multi-granularity cues via progressive training. DP-Net \cite{DP-Net} injects learnable positional cues and dynamically aligns them with visual content. CSC-Net \cite{CSC-Net} improves localization by enforcing class-specific semantic coherency. CAL \cite{CAL} introduces counterfactual attention learning to enhance attention quality and guide models toward key local cues.

\section{Approach}
\label{sec:approach}

\subsection{Preliminaries}
\label{sec:preliminaries}
Dataset distillation (DD) aims to synthesize a compact training set that preserves the training utility of a large labeled dataset. Let $\mathcal{T}=\{(x_i,y_i)\}_{i=1}^{N}$ denote the original training set. The objective is to distill a substantially smaller set $\mathcal{D}=\{(\tilde{x}_j,\tilde{y}_j)\}_{j=1}^{M}$ with $M\ll N$, such that models trained on $\mathcal{D}$ achieve performance comparable to those trained on $\mathcal{T}$. This objective can be expressed by minimizing the discrepancy between the losses obtained under the two training regimes:
\begin{equation}
\label{eq:dd_gap}
\sup_{(x,y)\sim \mathcal{T}}
\left| \mathcal{L}\!\left(f_{\theta_{\mathcal{T}}}(x),y\right)
-\mathcal{L}\!\left(f_{\theta_{\mathcal{D}}}(x),y\right)\right|
\le \delta,
\end{equation}
where $\mathcal{L}(\cdot,\cdot)$ denotes the task loss (\eg, cross-entropy), and $\theta_{\mathcal{T}}$ and $\theta_{\mathcal{D}}$ represent the parameters obtained by training on $\mathcal{T}$ and $\mathcal{D}$, respectively. Accordingly, dataset distillation can be formulated as the following optimization problem:
\begin{equation}
\label{eq:dd_obj}
\arg\min_{\mathcal{D},\,|\mathcal{D}|}
\sup_{(x,y)\sim \mathcal{T}}
\left| \mathcal{L}\!\left(f_{\theta_{\mathcal{T}}}(x),y\right)
-\mathcal{L}\!\left(f_{\theta_{\mathcal{D}}}(x),y\right)\right|.
\end{equation}
Under a given distillation budget, the distilled set $\mathcal{D}$ maintains an equal number of images for each class.

\subsection{Overview of FD$^{2}$}
\begin{figure}[t]
  \centering
  \includegraphics[width=\linewidth]{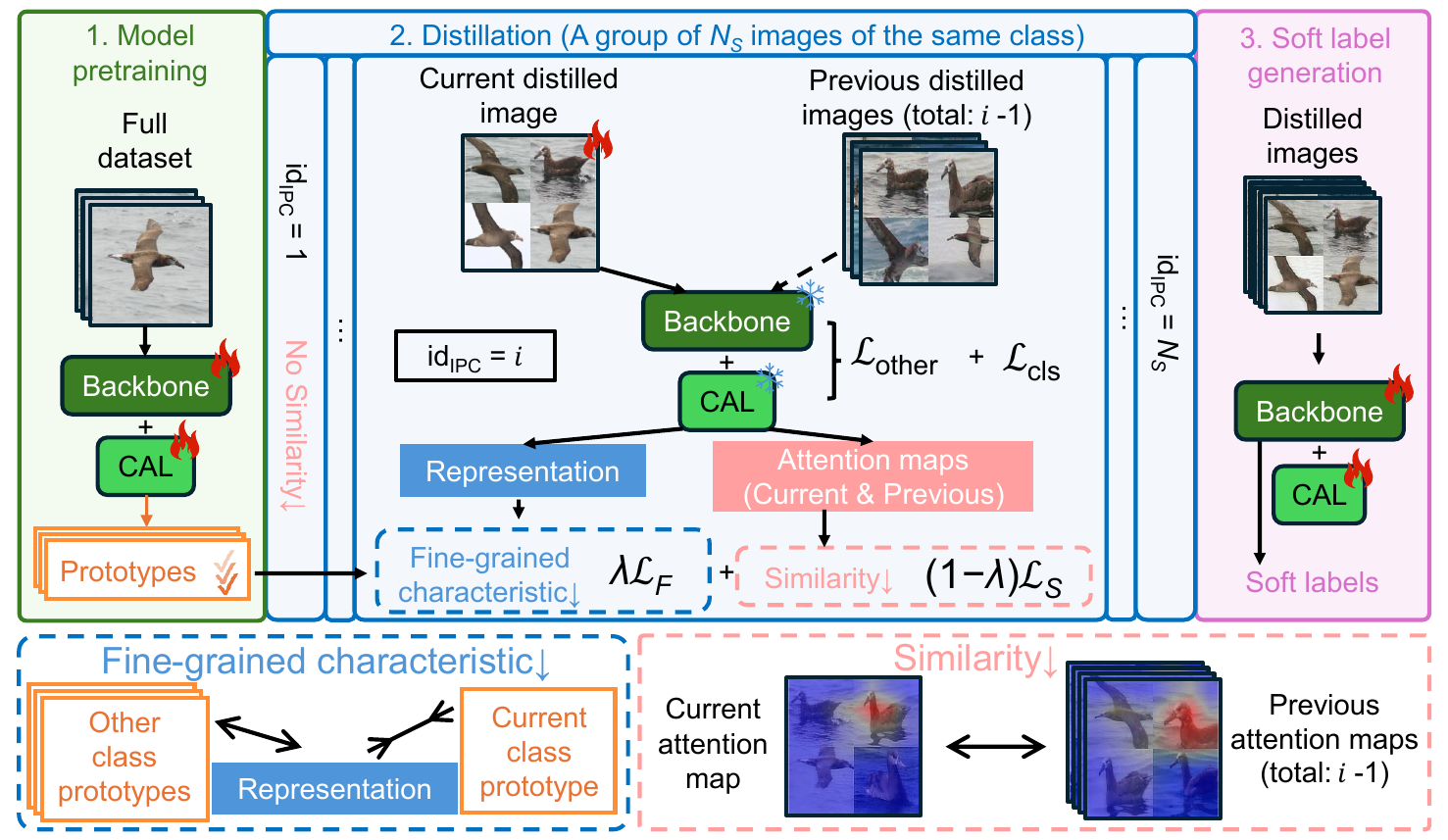}
  \caption{Overview of FD$^{2}$. 1. We pretrain a Backbone+CAL teacher and maintain class prototypes online. 2. We distill images in the same-class groups of size $N_S$, adding a fine-grained characteristic constraint (prototype alignment or separation) and a similarity constraint (diverse attention), together with class-supervision from both branches. 3. We generate soft labels using the backbone branch.}
  \label{fig:overview}
\end{figure}

The overall framework of FD$^{2}$ is illustrated in \cref{fig:overview}. FD$^{2}$ follows the decoupled distillation paradigm and can be integrated into existing decoupled methods as an additive module. We first pretrain a Backbone+CAL teacher by jointly optimizing the backbone and CAL classifiers while maintaining fine-grained class prototypes online.
During distillation, distilled images are optimized in a sample-wise manner and organized into the same-class groups of size $N_S$. For each current distilled sample, its feature representation and attention map are obtained from the Backbone+CAL teacher, and the original distillation objective is augmented with two constraints. The fine-grained characteristic constraint pulls the sample toward the prototype of the target class and pushes it away from prototypes of other classes, improving intra-class compactness and inter-class separability. The similarity constraint encourages within-class diversity by increasing the discrepancy between the attention map of the current sample and those of previously generated samples of the same class, thereby promoting coverage of different discriminative regions.
Finally, during the soft-label generation stage, soft labels are produced by the backbone branch. Since downstream evaluation trains a standard backbone student, this design helps reduce bias caused by architectural mismatch.

\subsection{Counterfactual Attention Learning (CAL)}
\label{sec:cal}
To address the limitations of previous methods on fine-grained datasets, we introduce Counterfactual Attention Learning (CAL) \cite{CAL}, a simple and effective approach that provides fine-grained class prototypes and precise attention maps. CAL leverages counterfactual intervention to construct a factual--counterfactual contrastive signal, enabling more reliable attention localization and discriminative representation learning under weak supervision. 

Given an input image $x$ with label $y$, a backbone feature extractor $f(\cdot)$ produces the final-stage feature map $F$, and an attention predictor $g(\cdot)$ generates $M$ attention maps $A=\{A_m\}_{m=1}^{M}$. CAL then applies an attention-weighted aggregation operator $\Phi(\cdot)$ (\eg, bilinear attention pooling) to obtain the factual representation $z$:
\begin{equation}
F=f(x),\qquad A=g(F),\qquad z=\Phi(F,A).
\label{eq:cal_factual}
\end{equation}

To explicitly quantify the contribution of attention, CAL performs a counterfactual intervention by replacing $A$ with an alternative (or perturbed) attention set $\bar A$ while keeping $F$ fixed. This operation produces a counterfactual representation $\hat z$, and the difference between factual and counterfactual predictions forms an effect prediction:
\begin{equation}
\hat z=\Phi(F,\bar A),\qquad p_{\mathrm{raw}}=Wz,\qquad p_{\mathrm{eff}}=p_{\mathrm{raw}}-W\hat z,
\label{eq:cal_counterfactual}
\end{equation}
where $W$ denotes the linear classifier of CAL, $p_{\mathrm{raw}}$ represents the factual logit, and $p_{\mathrm{eff}}$ measures the gain induced by the factual attention relative to the counterfactual attention. Intuitively, if attention focuses on truly discriminative regions, replacing it with $\bar A$ weakens class evidence, which increases the discriminability of $p_{\mathrm{eff}}$ and is therefore encouraged during training.

Meanwhile, CAL maintains a feature center for each class as a class prototype. Let $c_y$ denote the prototype of class $y$. The prototype is updated online using a momentum rule, and a center regularizer encourages the factual representation $z$ to approach the corresponding class prototype:
\begin{equation}
c_y \leftarrow (1-\mu)c_y+\mu\,\mathrm{Norm}(z),\qquad 
\mathcal{L}_{\mathrm{center}}=\big\|z-\mathrm{Norm}(c_y)\big\|_2^2,
\label{eq:cal_center}
\end{equation}
where $\mathrm{Norm}(\cdot)$ denotes feature normalization and $\mu\in(0,1)$ denotes the momentum coefficient.

Subsequently, the pretraining objective of CAL is defined as:
\begin{equation}
\mathcal{L}_{\mathrm{CAL}}
=\mathcal{L}_{\mathrm{ce}}(p_{\mathrm{raw}},y)
+\mathcal{L}_{\mathrm{ce}}(p_{\mathrm{eff}},y)
+\eta\mathcal{L}_{\mathrm{center}},
\label{eq:cal_loss}
\end{equation}
where $\mathcal{L}_{\mathrm{ce}}(\cdot,\cdot)$ denotes the cross-entropy loss and $\eta$ controls the strength of the center regularization. This objective enables CAL to identify discriminative regions more reliably and to learn discriminative representations during training, which improves classification accuracy.

During the pretraining, distillation, and soft-label generation stages, the teacher adopts the Backbone+CAL architecture. In the post-evaluation stage, following the standard protocol of prior methods, a plain Backbone student model is used. Notably, during pretraining, the backbone classifier and the CAL classifier share the same backbone features and are jointly optimized. This design is motivated by the use of a backbone student model in the post-evaluation stage. If soft labels generated by the CAL branch are used directly, instability may be introduced into the probability outputs of the student model. Formally, the backbone branch predicts:
\begin{equation}
p_{\mathrm{bb}}=W_{\mathrm{bb}}\cdot\mathrm{GAP}(F),
\label{eq:bb_pred}
\end{equation}
where $W_{\mathrm{bb}}$ denotes the backbone classifier and $\mathrm{GAP}(\cdot)$ denotes global average pooling. The overall pretraining loss is defined as a weighted combination:
\begin{equation}
\mathcal{L}_{\mathrm{pre}}=(1-\alpha)\,\mathcal{L}_{\mathrm{ce}}(p_{\mathrm{bb}},y)\;+\;\alpha\,\mathcal{L}_{\mathrm{CAL}},
\label{eq:pre_loss}
\end{equation}
where $\alpha\in[0,1]$ controls the relative contribution of the two branches. Joint optimization ensures that the backbone teacher model preserves the ability to generate stable soft labels for student training, while still benefiting from the discriminative representations and prototype aggregation provided by CAL.

\subsection{Constraints}
\label{sec:constraints}
Suppressing unfavorable fine-grained characteristics and reducing similarity among distilled samples are two key strategies for fine-grained dataset distillation. Therefore, two specialized constraints are introduced.

\paragraph{Fine-grained Characteristic Constraint.}
Suppressing unfavorable fine-grained characteristics in distilled samples reduces the difficulty for the student model to learn discriminative representations. A straightforward strategy aligns the representation of a sample with the prototype of its class while increasing the distance to the prototypes of other classes:
\begin{equation}
\mathcal{L}_{F}(\tilde{x}_{y,i})=
\beta\,\ell_2(z_{y,i},c_y)+
(1-\beta)\left(1-\mathbb{E}_{k\neq y}\big[\ell_2(z_{y,i},c_k)\big]\right),
\qquad \beta\in[0,1],
\label{eq:LF}
\end{equation}
where $\tilde{x}_{y,i}$ denotes the $i$-th distilled image of class $y$ in a group, $\beta$ is a weighting hyperparameter, $\ell_2(u,v)=\|u-v\|_2/(\|u\|_2+\|v\|_2+\varepsilon)$ is a symmetrically normalized Euclidean metric, $z_{y,i}$ denotes the representation of the sample, and $c_y$ and $c_k$ denote the prototypes of the current class and other classes, respectively. According to Eq.~\eqref{eq:LF}, the intra-class compactness and inter-class separability of the distilled images are enhanced.

\paragraph{Similarity Constraint.}
To suppress similarity among distilled samples and encourage diverse discriminative regions, the distance between the attention map of the current sample and the attention maps of previously distilled samples is maximized in the attention space:
\begin{equation}
\mathcal{L}_{S}(\tilde{x}_{y,i})=
1-\mathbb{E}_{j<i}[\ell_2\big(A_{y,i},A_{y,j})\big],\qquad 1<i\le N_S,
\label{eq:LS}
\end{equation}
and $\mathcal{L}_{S}$ is omitted when $i=1$. Here, $A_{y,i}$ denotes the attention map of the current sample, $A_{y,j}$ denotes the attention maps of previous samples of the same class, and $N_S$ denotes the number of distilled images in the group that satisfy this constraint.

Since pretraining adopts a dual-classifier architecture, distillation also supervises class signals with both classifiers to leverage their discriminative capability:
\begin{equation}
\mathcal{L}_{\mathrm{cls}}(\tilde{x}_{y,i})=
(1-\alpha)\,\mathcal{L}_{\mathrm{ce}}\!\left(p^{y,i}_{\mathrm{bb}},y\right)
+\alpha\,\mathcal{L}_{\mathrm{ce}}\!\left(p^{y,i}_{\mathrm{cal}},y\right),
\qquad \alpha\in[0,1].
\label{eq:Lcls}
\end{equation}
Here, $\alpha$ is a weighting hyperparameter, and $p^{y,i}_\mathrm{bb}$ and $p^{y,i}_\mathrm{cal}$ denote the logits produced by the backbone and CAL branches, respectively. Let $\mathcal{L}_{\mathrm{other}}$ denote the original loss of the underlying distillation method. The final objective for optimizing $\tilde{x}_{y,i}$ is
\begin{equation}
\mathcal{L}_{\mathrm{total}}=\mathcal{L}_{\mathrm{other}}+\mathcal{L}_{\mathrm{cls}}+\lambda\mathcal{L}_{F}+(1-\lambda)\mathcal{L}_{S},
\label{eq:Ltotal}
\end{equation}
where $\lambda$ controls the relative contribution of the two proposed constraints. The training process is summarized in \cref{alg:dfdd}. The proposed constraints are jointly optimized with the original objective, which suppresses undesirable fine-grained characteristics and similarity without degrading the effectiveness of the underlying method. A theoretical analysis of these constraints is provided in the supplementary material.

\begin{algorithm}[t]
\caption{FD$^{2}$}
\label{alg:dfdd}
\begin{algorithmic}[1]
\Require Class index $y$, number of classes $K$, group size $N_S$
\Require Pretrained teacher $\phi(\cdot)$ (Backbone+CAL), prototypes $\{c_k\}_{k=1}^{K}$
\Require Weights $\alpha,\beta,\lambda$, original loss $\mathcal{L}_{\mathrm{other}}$
\For{$i=1$ to $N_{S}$}
    \State Initialize distilled image $\tilde{x}_{y,i}$
    \If{$i>1$}
        \For{$j=1$ to $i-1$}
            \State $(\cdot,\cdot,\cdot, A_{y,j}) \gets \phi(\tilde{x}_{y,j})$ \Comment{Attention maps of previous same-class samples}
        \EndFor
    \EndIf
    \For{each step}
        \State $(p^{y,i}_\mathrm{bb},\,p^{y,i}_\mathrm{cal},\,z_{y,i},\,A_{y,i}) \gets \phi(\tilde{x}_{y,i})$ \Comment{Logits, representation and attention map}
        \State Compute $\mathcal{L}_{F}$ by Eq.~\eqref{eq:LF}
        \If{$i>1$}
            \State Compute $\mathcal{L}_{S}$ by Eq.~\eqref{eq:LS} using $\{A_{y,j}\}_{j=1}^{i-1}$ and $A_{y,i}$
        \Else
            \State $\mathcal{L}_{S} \gets 0$
        \EndIf
        \State Compute $\mathcal{L}_{\mathrm{cls}}$ by Eq.~\eqref{eq:Lcls}
        \State $\mathcal{L}_{\mathrm{total}} \gets \mathcal{L}_{\mathrm{other}}+\mathcal{L}_{\mathrm{cls}}+\lambda\mathcal{L}_{F}+(1-\lambda)\mathcal{L}_{S}$
        \State Update $\tilde{x}_{y,i}$ by gradient descent on $\mathcal{L}_{\mathrm{total}}$
    \EndFor
\EndFor
\end{algorithmic}
\end{algorithm}

\section{Experiments}
\subsection{Experimental Settings}

\paragraph{Datasets.}
Evaluation of FD$^{2}$ is conducted on three fine-grained datasets, CUB-200-2011~\cite{CUB-200-2011}, FGVC-Aircraft~\cite{FGVC-Aircraft}, and Stanford Cars~\cite{StanfordCars}, together with two general datasets, ImageNette and ImageWoof, which are subsets of ImageNet-1K~\cite{ImageNet}. All input images are resized to $224\times224$.

\paragraph{Comparative Methods.}
Comparison of FD$^{2}$ is performed with three state-of-the-art dataset distillation baselines under the same experimental environment on a single H200 GPU. Since FD$^{2}$ is designed for the decoupled distillation paradigm, SRe$^{2}$L++~\cite{CVDD} is selected as the primary baseline. This method extends SRe$^{2}$L~\cite{SRe2L} through stronger data augmentation and batch-specific soft labels. To evaluate plug-and-play transferability, FADRM+~\cite{FADRM} is also included. FADRM+ is an ensemble-based decoupled method that improves both efficiency and performance through multi-scale residual connections. In addition, RDED~\cite{RDED} is adopted as a representative approach that constructs distilled sets by cropping real images, which can be interpreted as exploiting localized regions during the distillation process.

\smallskip
\smallskip

\noindent In cross-architecture generalization experiments and ablation studies, CUB-200-2011 is used as the default dataset unless otherwise specified. The supplementary material provides additional ablation studies, further comparisons with other advanced methods on fine-grained datasets, and efficiency analysis.

\begin{table}[t]
\centering
\caption{Top-1 accuracy is reported in the post-evaluation stage to compare FD$^{2}$ with three state-of-the-art methods. For fine-grained datasets, where the number of samples per class is limited, the value of IPC is set to 1, 3, and 5. Following the standard protocol of each method, post-evaluation is conducted for 300 epochs for RDED, which constructs distilled sets by cropping real images, and for 400 epochs for the decoupled baselines SRe$^{2}$L++ and FADRM+. Evaluation of FD$^{2}$ is performed by integrating it into SRe$^{2}$L++ and FADRM+ (denoted by the subscript ``$_{\mathrm{FD^{2}}}$'') while keeping the same post-evaluation settings as the corresponding baseline methods. SRe$^{2}$L++ and its FD$^{2}$ version follow the standard same-architecture evaluation protocol, while FADRM+ and its FD$^{2}$ version use multi-teacher for DD and single student for post-evaluation.}
\label{tab:fg}
\scriptsize
\setlength{\tabcolsep}{2pt}
\renewcommand{\arraystretch}{0.6}

\resizebox{\linewidth}{!}{%
\begin{tabular}{
  c c c c
  >{\columncolor{colBlueLight}}c
  >{\columncolor{colBlue}}c
  >{\columncolor{colGreenLight}}c
  >{\columncolor{colGreen}}c
}
\toprule
Dataset & Student & IPC & RDED & SRe$^2$L++ & SRe$^2$L++$_{\mathrm{FD^{2}}}$ & FADRM+ & FADRM+$_{\mathrm{FD^{2}}}$ \\
\midrule
\multirow{6}{*}[-1.2ex]{CUB-200-2011}
& \multirow{3}{*}{ResNet18} & 1 & 38.3 & 53.4 & 56.4\,($\uparrow 3.0$) & 54.8 & 55.0\,($\uparrow 0.2$) \\
&                           & 3 & 52.6 & 60.0 & 64.9\,($\uparrow 4.9$) & 64.0 & 64.6\,($\uparrow 0.6$) \\
&                           & 5 & 63.9 & 63.5 & 67.0\,($\uparrow 3.5$) & 66.4 & 67.5\,($\uparrow 1.1$) \\
\cmidrule(lr){2-8}
& \multirow{3}{*}{ResNet50} & 1 & 33.4 & 61.1 & 70.1\,($\uparrow 9.0$) & 66.2 & 66.5 ($\uparrow 0.3$) \\
&                           & 3 & 49.0 & 65.1 & 73.7\,($\uparrow 8.6$) & 70.4 & 70.6\,($\uparrow 0.2$) \\
&                           & 5 & 58.6 & 68.1 & 75.5\,($\uparrow 7.4$) & 71.5 & 72.0\,($\uparrow 0.5$) \\
\midrule
\multirow{6}{*}[-1.2ex]{FGVC-Aircraft}
& \multirow{3}{*}{ResNet18} & 1 & 22.1 & 52.6 & 58.2\,($\uparrow 5.6$)  & 55.0 & 60.5\,($\uparrow 5.5$) \\
&                           & 3 & 36.4 & 66.6 & 76.1\,($\uparrow 9.5$)  & 72.9 & 75.1\,($\uparrow 2.2$) \\
&                           & 5 & 38.6 & 68.3 & 80.0\,($\uparrow 11.7$) & 74.0 & 77.6\,($\uparrow 3.6$) \\
\cmidrule(lr){2-8}
& \multirow{3}{*}{ResNet50} & 1 & 12.8 & 59.3 & 67.8\,($\uparrow 8.5$) & 70.8 & 73.8\,($\uparrow 3.0$) \\
&                           & 3 & 34.8 & 68.0 & 76.7\,($\uparrow 8.7$) & 76.1 & 78.9\,($\uparrow 2.8$) \\
&                           & 5 & 45.2 & 72.2 & 79.0\,($\uparrow 6.8$) & 78.6 & 81.0\,($\uparrow 2.4$) \\
\midrule
\multirow{6}{*}[-1.2ex]{Stanford Cars}
& \multirow{3}{*}{ResNet18} & 1 & 33.0 & 52.4 & 64.5\,($\uparrow 12.1$) & 60.3 & 74.1\,($\uparrow 13.8$) \\
&                           & 3 & 69.4 & 68.2 & 75.2\,($\uparrow 7.0$)  & 75.0 & 84.8\,($\uparrow 9.8$) \\
&                           & 5 & 76.1 & 70.9 & 81.4\,($\uparrow 10.5$) & 77.7 & 86.6\,($\uparrow 8.9$) \\
\cmidrule(lr){2-8}
& \multirow{3}{*}{ResNet50} & 1 & 12.8 & 65.6 & 80.7\,($\uparrow 15.1$) & 73.9 & 85.3\,($\uparrow 11.4$) \\
&                           & 3 & 70.2 & 76.6 & 86.5\,($\uparrow 9.9$)  & 78.9 & 87.8\,($\uparrow 8.9$) \\
&                           & 5 & 75.9 & 78.0 & 88.3\,($\uparrow 10.3$) & 82.2 & 89.0\,($\uparrow 6.8$) \\
\bottomrule
\end{tabular}}
\end{table}

\subsection{Main Results}

\paragraph{Fine-grained Datasets.} As shown in \cref{tab:fg}, FD$^{2}$ improves the performance of decoupled methods in most settings when compared with SRe$^2$L++ and FADRM+. This result indicates that the mitigation of unfavorable fine-grained characteristics and excessive similarity among samples can improve the quality of distilled data and facilitate the learning of discriminative representations by the student model. Notably, the improvements are typically more pronounced when IPC$=1$, with particularly clear gains on the Stanford Cars dataset. For example, with ResNet18, SRe$^2$L++$_{\mathrm{FD^{2}}}$ exceeds SRe$^2$L++ by 12.1\%, and FADRM+$_{\mathrm{FD^{2}}}$ exceeds FADRM+ by 13.8\%. This observation suggests that FD$^{2}$ can still provide effective discriminative cues even when only a single distilled sample is available for each class. In contrast, RDED exhibits the lowest overall performance, which indicates that coarse cropping of real images does not reliably capture key discriminative regions. 

\paragraph{General datasets.} To verify the applicability of FD$^{2}$ beyond fine-grained datasets, we further evaluate it on ImageNette and ImageWoof. Since SRe$^2$L++ does not report results on ImageWoof, we use RDED and FADRM+ as baselines.
As shown in \cref{tab:general}, on ImageWoof (which can be viewed as a 10-class fine-grained subset of ImageNet), FADRM+$_{\mathrm{FD^{2}}}$ achieves clear improvements in most settings, supporting the effectiveness of FD$^{2}$.
On ImageNette, the effect of FD$^{2}$ depends on the student architecture: with ResNet50, FADRM+$_{\mathrm{FD^{2}}}$ yields clear improvements across all IPC settings, while with the smaller ResNet18, the gains are more limited and become noticeable mainly at IPC$=50$. This suggests that the additional fine-grained cues introduced by FD$^{2}$ are more effectively utilized by higher-capacity students.
RDED consistently performs the worst across datasets and IPC settings, indicating that a cropping-based manner provides limited useful information on general datasets.
Overall, FD$^{2}$ improves performance in most settings, with reduced gains mainly observed for small-capacity students on simpler datasets.

\begin{table}[t]
\centering
\scriptsize
\setlength{\tabcolsep}{3pt}
\renewcommand{\arraystretch}{0.6}
\caption{Top-1 accuracy in the post-evaluation stage is reported to compare FD$^{2}$ with two baselines. On general datasets, IPC is set to 1, 10, and 50. Following the standard protocols, post-evaluation is conducted for 300 epochs for both RDED and FADRM+. FD$^{2}$ is evaluated by integrating it into FADRM+ (denoted by the subscript ``$_{\mathrm{FD^{2}}}$'') under the same post-evaluation setting.}
\label{tab:general}
\begin{tabular}{
  c c
  c >{\columncolor{colGreenLight}}c >{\columncolor{colGreen}}c
  c >{\columncolor{colGreenLight}}c >{\columncolor{colGreen}}c
}
\toprule
\multirow{2}{*}{Dataset} & \multirow{2}{*}{IPC} &
\multicolumn{3}{c}{ResNet18} & \multicolumn{3}{c}{ResNet50} \\
\cmidrule(lr){3-5}\cmidrule(lr){6-8}
& & RDED & FADRM+ & FADRM+$_{\mathrm{FD^{2}}}$ & RDED & FADRM+ & FADRM+$_{\mathrm{FD^{2}}}$ \\
\midrule

\multirow{3}{*}{ImageNette}
& 1  & 35.8 & 39.2 & 38.6\,($\downarrow 0.6$)  & 27.0 & 31.9 & 39.0\,($\uparrow 7.1$) \\
& 10 & 61.4 & 69.0 & 69.5\,($\uparrow 0.5$) & 55.0 & 68.1 & 71.4\,($\uparrow 3.3$)  \\
& 50 & 80.4 & 84.6 & 86.7\,($\uparrow 2.1$)    & 81.8 & 85.4 & 92.3\,($\uparrow 6.9$)  \\
\midrule

\multirow{3}{*}{ImageWoof}
& 1  & 20.8 & 22.8 & 22.1\,($\downarrow 0.7$)  & 17.8 & 19.9 & 30.3\,($\uparrow 10.4$) \\
& 10 & 38.5 & 57.3 & 60.7\,($\uparrow 3.4$)    & 35.2 & 54.1 & 64.6\,($\uparrow 10.5$) \\
& 50 & 68.5 & 72.6 & 80.0\,($\uparrow 7.4$)    & 67.0 & 71.7 & 80.2\,($\uparrow 8.5$)  \\
\bottomrule
\end{tabular}
\end{table}

\subsection{Analysis}
\paragraph{Fine-grained Characteristic.}
As shown in \cref{fig:tsne_intra_disp}, we visualize distilled samples on CUB-200-2011 using t-SNE \cite{t-SNE} in the feature space, where 10 classes are randomly sampled with 5 distilled images per class. Compared with SRe$^{2}$L++, same-class samples distilled with FD$^{2}$ exhibit higher intra-class compactness. Meanwhile, \cref{fig:inter_class_dist} shows larger nearest-neighbor center distances between classes under FD$^{2}$, indicating improved inter-class separability. These results demonstrate that the proposed fine-grained characteristic constraint reduces intra-class variance while enhancing inter-class separability.

\begin{figure}[t]
  \centering
  \begin{subfigure}[t]{0.34\linewidth}
    \centering
    \includegraphics[width=\linewidth]{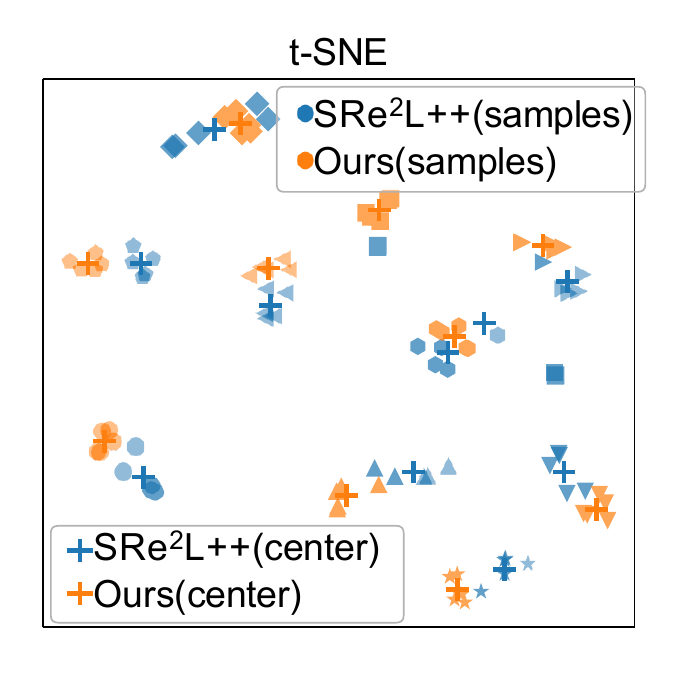}
    \caption{}
    \label{fig:tsne_intra_disp}
  \end{subfigure}\hspace{0.02\linewidth}%
  \begin{subfigure}[t]{0.64\linewidth}
    \centering
    \includegraphics[width=\linewidth]{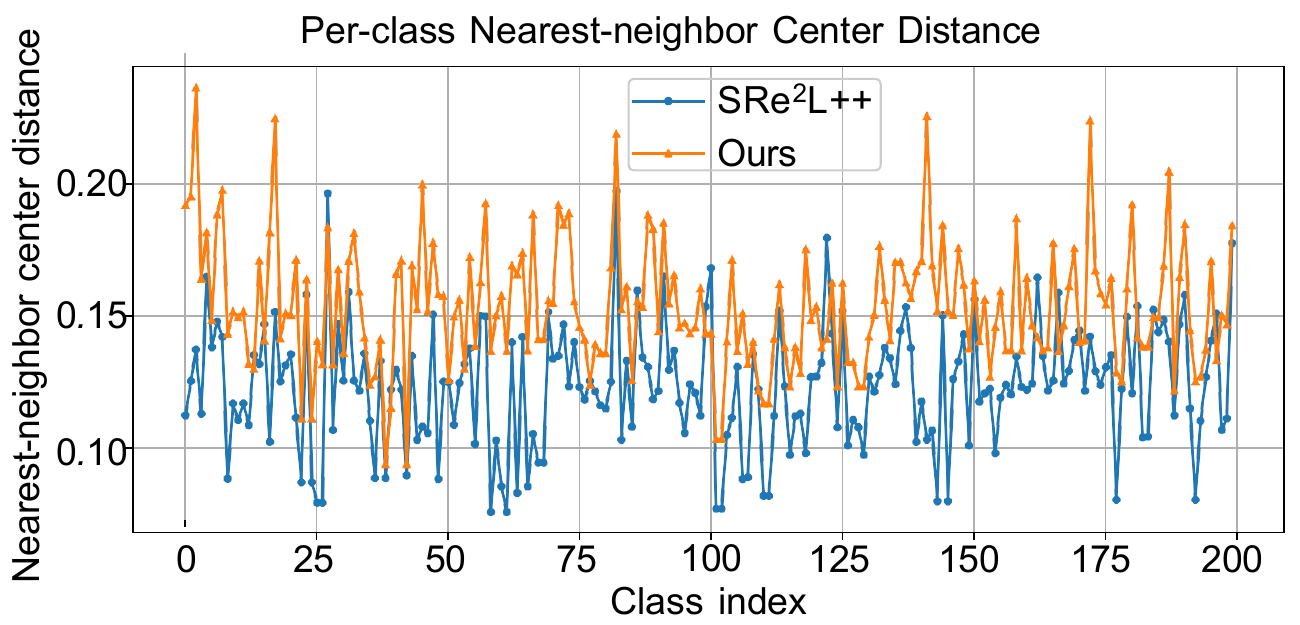}
    \caption{}
    \label{fig:inter_class_dist}
  \end{subfigure}
  \caption{(a) t-SNE feature distribution. (b) Nearest-neighbor center distance of each class for distilled images on CUB-200-2011.}
  \label{fig:tsne_and_inter}
\end{figure}

\paragraph{Similarity.}
As illustrated in \cref{fig:attn_sre2l_vs_ours}, attention heatmaps of the black-footed albatross class in CUB-200-2011 show that SRe$^{2}$L++ produces highly homogeneous distilled samples within the same class, where the attended regions are nearly identical across samples. When the similarity constraint is applied within a group of $\mathrm{id}_{\mathrm{IPC}}{=}1\text{-}4$, FD$^{2}$ encourages more diverse attention distributions, enriching the discriminative regions in the distilled samples. Although RDED introduces some local information through cropping, this coarse strategy does not reliably capture key discriminative cues. As a result, the performance of RDED is generally lower than that of FD$^{2}$, as shown in \cref{tab:fg} and \cref{tab:general}.

\paragraph{Transferability.}
FD$^{2}$ functions as an add-on module that can be integrated into different decoupled distillation methods without modifying the original training pipeline. As shown in \cref{tab:fg}, integrating FD$^{2}$ into SRe$^2$L++ yields clear accuracy improvements across datasets and IPC settings. When integrated into FADRM+, improvements are also observed on most datasets, although the gains on CUB-200-2011 remain relatively limited, while larger improvements appear on Stanford Cars. These results indicate that FD$^{2}$ consistently enhances the performance of different decoupled methods and demonstrates strong transferability.

\subsection{Cross-Architecture Generalization}

\begin{table}[t]
  \centering
  \scriptsize
  \setlength{\tabcolsep}{3pt}
  \caption{Top-1 accuracy for cross-architecture generalization at IPC=3.}
  \label{tab:cag}
  \begin{tabular}{c
    >{\columncolor{colBlueLight}}c
    >{\columncolor{colBlue}}c
    >{\columncolor{colGreenLight}}c
    >{\columncolor{colGreen}}c}
    \toprule
    Student &
    SRe$^2$L++ & SRe$^2$L++$_{\mathrm{FD^{2}}}$ & FADRM+ & FADRM+$_{\mathrm{FD^{2}}}$ \\
    \midrule
    ShuffleNetV2 \cite{ShuffleNet} & 38.5 & 47.6\,($\uparrow 9.1$) & 46.4 & 46.7\,($\uparrow 0.3$) \\
    MobileNetV2 \cite{MobileNetV2} & 57.1 & 62.9\,($\uparrow 5.8$) & 64.9 & 65.8\,($\uparrow 0.9$) \\
    DenseNet121 \cite{DenseNet}    & 62.0 & 65.8\,($\uparrow 3.8$) & 68.6 & 68.1\,($\downarrow 0.5$) \\
    ResNet18 \cite{ResNet}         & 60.0 & 64.9\,($\uparrow 4.9$) & 64.0 & 64.6\,($\uparrow 0.6$) \\
    ResNet50 \cite{ResNet}         & 64.2 & 67.7\,($\uparrow 3.5$) & 70.4 & 70.6\,($\uparrow 0.2$) \\
    ConvNeXt-Tiny \cite{ConvNeXt}   & 28.9 & 30.4\,($\uparrow 1.5$) & 29.1 & 35.1\,($\uparrow 0.2$) \\
    \bottomrule
  \end{tabular}
\end{table}

Cross-architecture generalization is an important criterion for evaluating the quality of distilled datasets, as it reflects transferability across different students and practical applicability.
To examine this property, we compare SRe$^2$L++, FADRM+, and their FD$^{2}$-augmented variants. For SRe$^2$L++ and SRe$^2$L++$_{\mathrm{FD}^{2}}$, we use ResNet18 as the teacher and evaluate with different students; we also include ResNet18 as the student for ease of comparison. For FADRM+ and FADRM+$_{\mathrm{FD}^{2}}$, we distill with multiple teachers and use one student for evaluation.
As shown in \cref{tab:cag}, integrating FD$^{2}$ improves performance for most architectures and methods. On DenseNet121 \cite{DenseNet}, FADRM+ performs slightly better than FADRM+$_{\mathrm{FD^{2}}}$, while SRe$^2$L++$_{\mathrm{FD^{2}}}$ consistently outperforms SRe$^2$L++. These results indicate that FD$^{2}$ provides strong cross-architecture generalization and stable performance across diverse evaluation settings.

\subsection{Ablation Study}
To reduce computational cost and runtime, FD$^{2}$ is integrated into SRe$^{2}$L++, and all ablation studies are conducted using ResNet18 + CAL.

\noindent
\begin{minipage}[t]{0.48\textwidth}\vspace{0pt}
  \centering
  \includegraphics[width=\linewidth]{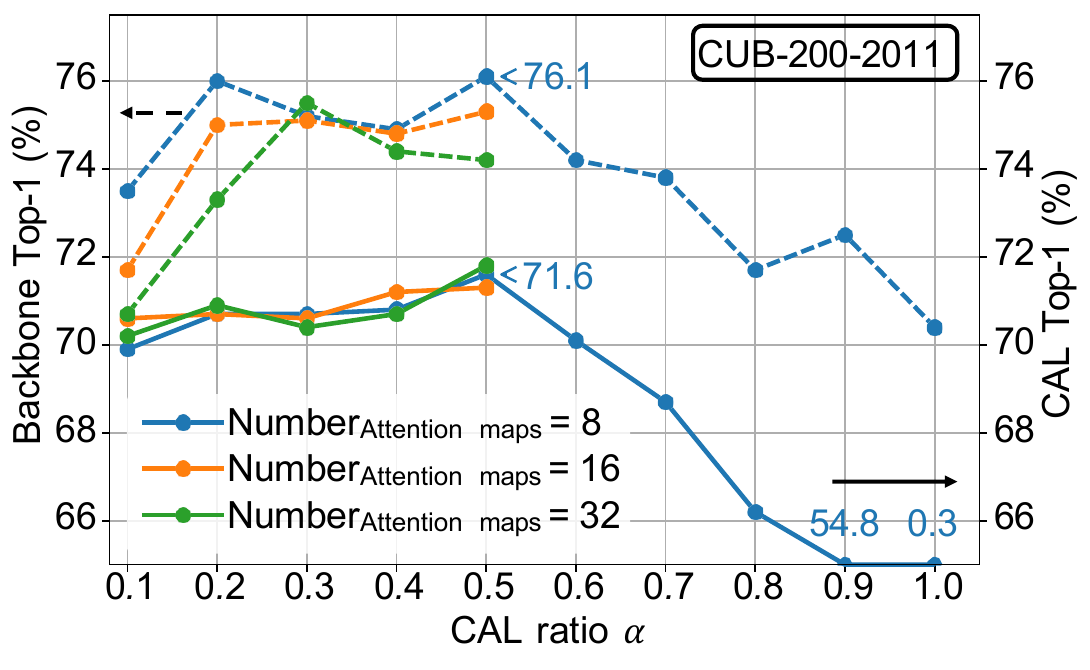}
  \vspace{-8pt}
  \captionsetup{type=figure,hypcap=false,skip=0pt}
  \captionof{figure}{Top-1 accuracy under different settings during pretraining.}
  \label{fig:top1_squeeze}
\end{minipage}\hspace{0.01\textwidth}
\begin{minipage}[t]{0.50\textwidth}\vspace{0pt}
  \centering
  \small
\captionsetup{type=table,hypcap=false,skip=2pt,justification=justified,singlelinecheck=false}
  \captionof{table}{Accuracy at IPC$=3$ on CUB-200-2011 and FGVC-Aircraft for distilled datasets obtained with different CAL ratios during distillation.}
  \label{tab:top1_recover}
  \setlength{\tabcolsep}{4pt}
  \renewcommand{\arraystretch}{1.0}
  \begin{tabular}{c c c}
    \toprule
    $\alpha$ & CUB-200-2011 & FGVC-Aircraft \\
    \midrule
    0.1 & 62.8 & 75.1 \\
    0.3 & 63.3 & \textbf{75.9} \\
    0.5 & \textbf{63.4} & 74.8 \\
    0.7 & 62.1 & 75.6 \\
    0.9 & 62.4 & 75.2 \\
    \bottomrule
  \end{tabular}
\end{minipage}

\paragraph{Impact of the Cal Ratio $\alpha$ in the Model Pretraining and Distillation.}
Since the backbone classifier and the CAL classifier are jointly optimized, we first perform an ablation study in the pretraining stage to determine an appropriate CAL ratio, as shown in \cref{fig:top1_squeeze}. With the number of attention maps fixed to 8, different CAL ratios are evaluated. When the CAL ratio exceeds 0.5, the accuracy of both the backbone branch and the CAL branch decreases noticeably. Therefore, in experiments with 16 and 32 attention maps, the CAL ratio is restricted to 0.1--0.5. Considering all configurations, the pretrained model with 8 attention maps and a CAL ratio of 0.5 is selected for the distillation stage.

During distillation, the effect of the CAL ratio on post-evaluation performance is further examined with IPC$=3$. As shown in \cref{tab:top1_recover}, on CUB-200-2011 the highest Top-1 accuracy (63.4\%) is obtained at CAL ratio $=0.5$. On FGVC-Aircraft, the best accuracy (75.9\%) is achieved at CAL ratio $=0.3$, which is also optimal in the pretraining stage. These results suggest that using the same CAL ratio in both stages helps maintain consistent feature distributions and attention strengths, thereby reducing stage mismatch. Therefore, adopting the same CAL ratio across stages provides a stable configuration.

\noindent
\begin{minipage}[t]{0.49\textwidth}\vspace{0pt}
\centering
\small
\captionsetup{type=table,hypcap=false,skip=1pt,justification=justified,singlelinecheck=false,width=\linewidth}
\captionof{table}{Impact of $\beta$ in the fine-grained characteristic constraint.}
\label{tab:beta}
\setlength{\tabcolsep}{2pt}
\renewcommand{\arraystretch}{0.8}
\begin{tabular}{ccc}
\toprule
Method & $\beta$ & Acc \\
\midrule
\multirow{5}{*}{SRe$^2$L++$_{\mathrm{FD}^{\scriptscriptstyle 2}}$}
 & 0.00 & 63.5 \\
 & 0.25 & 63.7 \\
 & 0.50 & \textbf{64.8} \\
 & 0.75 & 64.7 \\
 & 1.00 & 64.1 \\
\bottomrule
\end{tabular}
\end{minipage}\hspace{0.02\textwidth}%
\begin{minipage}[t]{0.49\textwidth}\vspace{0pt}
\centering
\small
\captionsetup{type=table,hypcap=false,skip=1pt,justification=justified,singlelinecheck=false,width=\linewidth}
\captionof{table}{Impact of the group size $N_S$ in the similarity constraint.}
\label{tab:NS}
\setlength{\tabcolsep}{2pt}
\renewcommand{\arraystretch}{0.95}
\begin{tabular}{ccc}
\toprule
Method & $N_S$ & Acc \\
\midrule
\multirow{4}{*}{SRe$^2$L++$_{\mathrm{FD}^{\scriptscriptstyle 2}}$}
  & 2 & 64.1 \\
  & 3 & 64.9 \\
  & 4 & \textbf{66.1} \\
  & 5 & 64.6 \\
\bottomrule
\end{tabular}
\end{minipage}

\paragraph{Impact of $\beta$ in the Fine-grained Characteristic Constraint.}
To study the relative weighting between alignment with same-class prototypes and repulsion from other-class prototypes, we enable the fine-grained characteristic constraint alone and ablate $\beta$ at IPC$=3$. As shown in \cref{tab:beta}, the best post-evaluation accuracy is achieved at $\beta=0.5$. This result indicates that a balanced weighting improves both intra-class compactness and inter-class separability, thereby improving the quality of samples. Therefore, $\beta=0.5$ is used in subsequent experiments.

\paragraph{Impact of the Group Size $N_S$ in the Similarity Constraint.}
To determine an appropriate group size $N_S$, we ablate $N_S$ with only the similarity constraint enabled and IPC$=5$. As shown in \cref{tab:NS}, $N_S=4$ achieves the best post-evaluation accuracy (66.1\%). When $N_S$ increases to 5, the available discriminative cues become limited, and additional samples tend to focus on repeated regions. With only a few distilled samples, this repetition may cause the student model to overfit to these regions, which reduces performance (64.6\%). Therefore, $N_S=4$ is selected. For larger IPC, multiple $N_S=4$ groups can be processed in parallel with multiple processes/GPUs. The distillation times for different $N_S$ settings are reported in the supplementary material.

\noindent
\begin{minipage}[t]{0.49\textwidth}\vspace{0pt}
\centering
\small
\captionsetup{type=table,hypcap=false,skip=1pt,justification=justified,singlelinecheck=false,width=\linewidth}
\captionof{table}{Ablation study of different constraint combinations.}
\label{tab:contraints}
\setlength{\tabcolsep}{1.2pt}
\renewcommand{\arraystretch}{0.95}
\resizebox{0.63\linewidth}{!}{%
\begin{tabular}{c c c c}
\toprule
Method & $\mathcal{L}_F$ & $\mathcal{L}_S$ & Acc \\
\midrule
\multirow{4}{*}{SRe$^2$L++$_{\mathrm{FD}^{\scriptscriptstyle 2}}$}
  & --        & --        & 63.4 \\
  & \ding{51} & --        & 64.8 \\
  & --        & \ding{51} & 64.6 \\
  & \ding{51} & \ding{51} & \textbf{64.9} \\
\bottomrule
\end{tabular}}
\end{minipage}\hspace{0.02\textwidth}%
\begin{minipage}[t]{0.49\textwidth}\vspace{0pt}
\centering
\small
\captionsetup{type=table,hypcap=false,skip=1pt,justification=justified,singlelinecheck=false,width=\linewidth}
\captionof{table}{The impact of the relative weight $\lambda$ between the proposed constraints.}
\label{tab:mixing_weight}
\setlength{\tabcolsep}{3pt}
\renewcommand{\arraystretch}{0.95}
\begin{tabular}{c c c}
\toprule
Method & $\lambda$ & Top-1 Acc \\
\midrule
\multirow{4}{*}{SRe$^2$L++$_{\mathrm{FD}^{\scriptscriptstyle 2}}$}
  & 0.2 & 65.4 \\
  & 0.4 & 65.0 \\
  & 0.6 & 65.7 \\
  & 0.8 & \textbf{67.0} \\
\bottomrule
\end{tabular}
\end{minipage}

\paragraph{Effectiveness of the Proposed Constraints.}
To evaluate the contribution of the proposed constraints, we compare different constraint combinations with IPC$=3$. As shown in \cref{tab:contraints}, both constraints improve performance over the no-constraint baseline (63.4\%). The fine-grained characteristic constraint provides a slightly larger gain (64.8\%) than the similarity constraint (64.6\%). When both constraints are enabled, the highest accuracy (64.9\%) is achieved, which confirms their complementary effects.

\paragraph{Effect of the Relative Weight $\lambda$ between the Proposed Constraints.} To determine the trade-off between the proposed constraints, we ablate the relative weight $\lambda$ under IPC$=5$. As shown in \cref{tab:mixing_weight}, $\lambda=0.8$ achieves higher post-evaluation Top-1 accuracy, while smaller values (0.2 and 0.4) degrade performance. This suggests that a larger weight for the fine-grained characteristic constraint improves distilled sample quality. Thus, we set $\lambda=0.8$ in subsequent experiments.

\subsection{Visualization}

\paragraph{Attention Visualization under Similarity Constraint.}
As shown in \cref{fig:vis_similarity}, we compare attention visualizations and mean pairwise cosine similarity (MPCS$\downarrow$) with and without $\mathcal{L}_{S}$. Without $\mathcal{L}_{S}$, same-class samples focus on repeated regions and show higher MPCS; with $\mathcal{L}_{S}$, they cover richer discriminative regions and show lower MPCS. Thus, although $\mathcal{L}_{S}$ brings limited accuracy gains in \cref{tab:contraints}, it improves sample diversity by reducing redundant attention and enriching discriminative cues.

\paragraph{Visualization Comparison of Distilled Samples across Methods.}
\cref{fig:vis_distilled} presents distilled samples from several CUB-200-2011 classes produced by different methods. Samples generated by SRe$^2$L++ and FADRM+ often exhibit noticeable Gaussian noise. In contrast, integrating FD$^{2}$ produces samples with clearer local structures and richer texture details. This observation indicates that the proposed method preserves fine-grained local cues more effectively and improves inter-class separability, which enhances the overall quality of the distilled dataset.

\noindent
\begin{minipage}[t]{0.48\textwidth}\vspace{0pt}
  \centering
  \includegraphics[width=\linewidth]{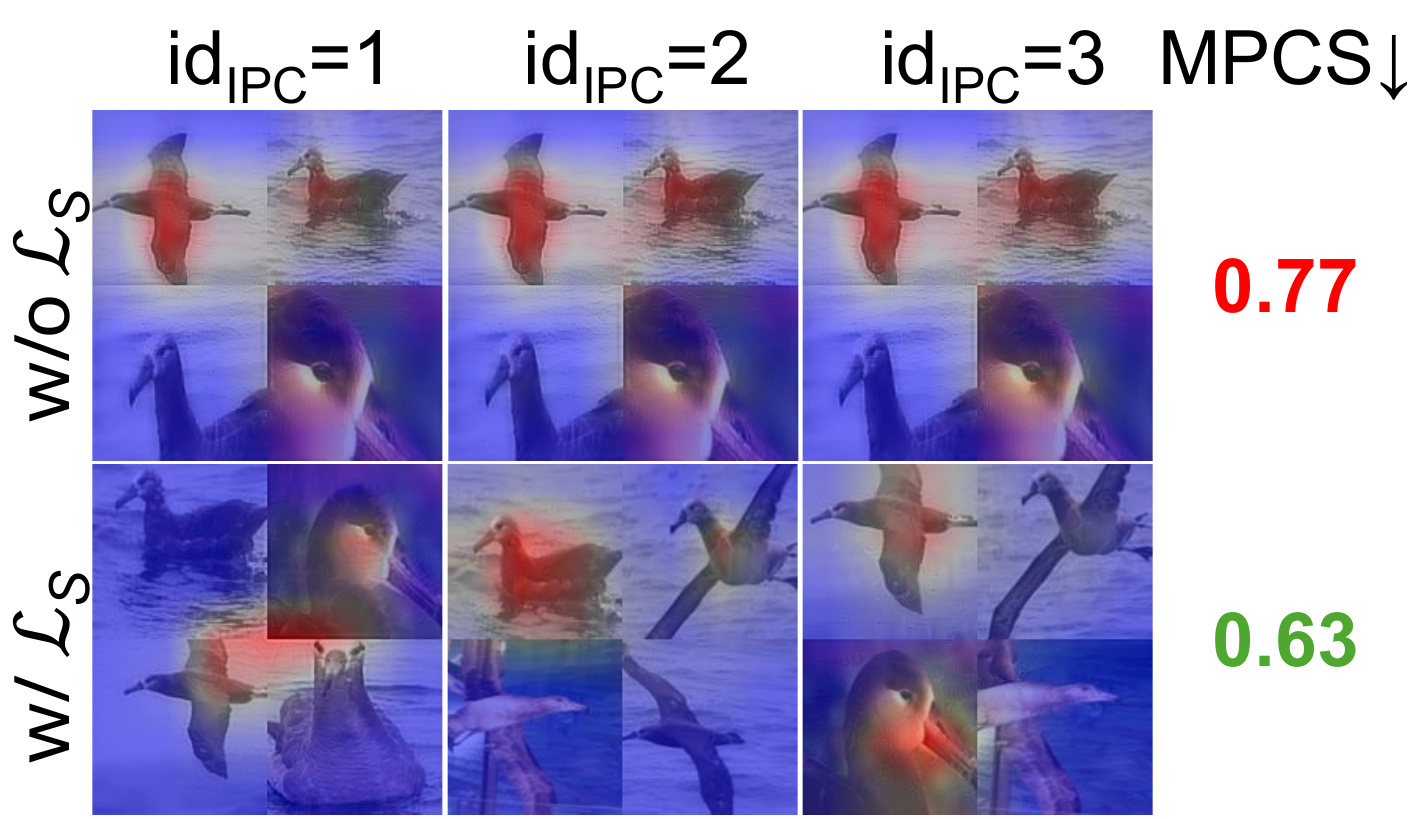}
  \captionsetup{type=figure,hypcap=false,skip=0pt}
  \captionof{figure}{Attention heatmaps and mean pairwise cosine similarity (MPCS$\downarrow$) for the $\mathrm{id}_{\mathrm{IPC}}=1\text{-}3$ distilled samples of the black footed albatross class in CUB-200-2011, with and without the similarity constraint under $N_S=4$.}

  \label{fig:vis_similarity}
\end{minipage}\hspace{0.01\textwidth} 
\begin{minipage}[t]{0.48\textwidth}\vspace{0pt}
  \centering
  \includegraphics[width=\linewidth]{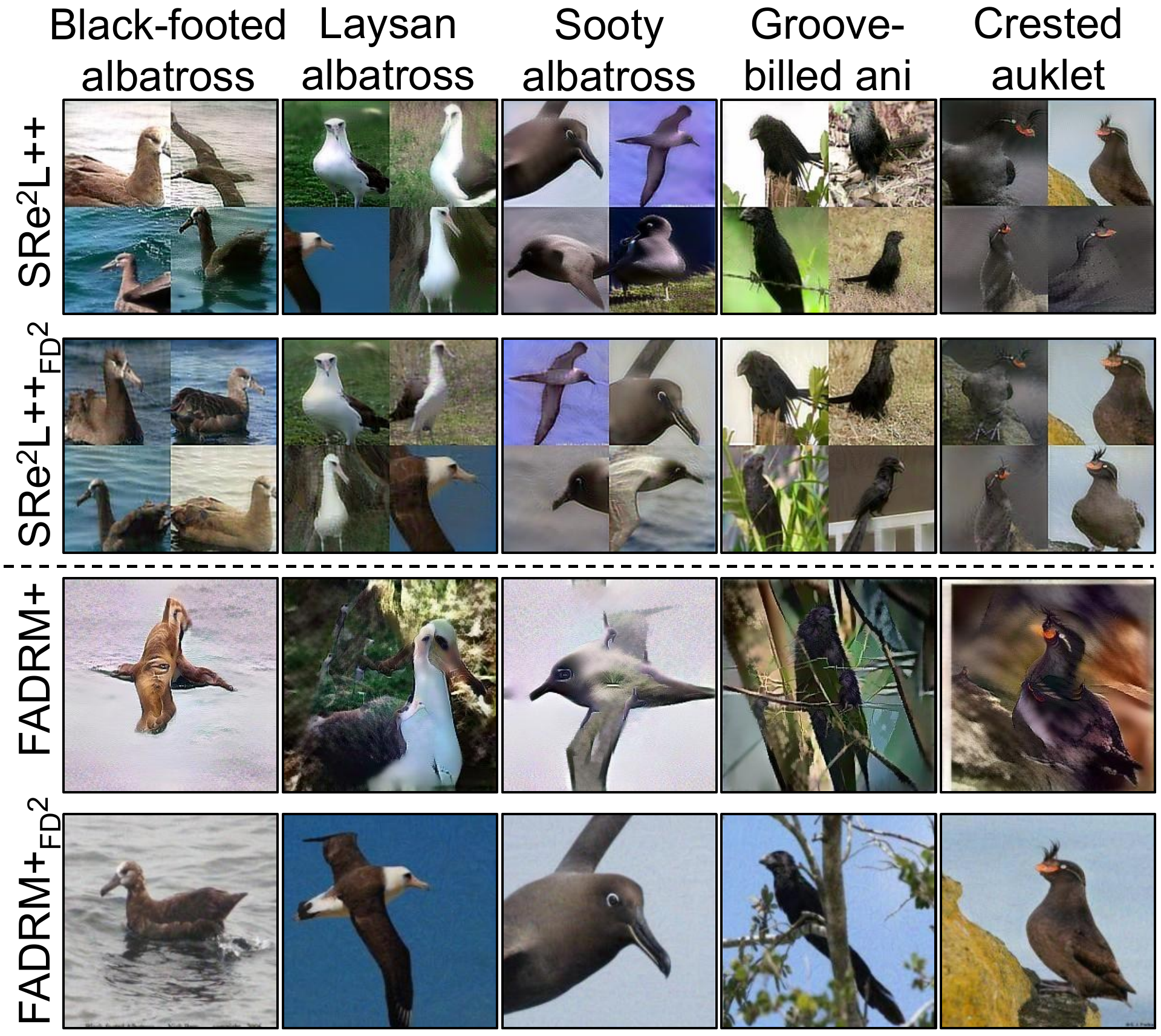}
  \captionsetup{type=figure,hypcap=false,skip=0pt} 
  \captionof{figure}{Visual comparison of distilled samples generated by different methods on five randomly selected classes.}
  \label{fig:vis_distilled}
\end{minipage}

\section{Conclusion}
We propose FD$^{2}$, a dedicated framework designed for fine-grained dataset distillation.
By jointly optimizing a fine-grained characteristic constraint and a similarity constraint together with the original distillation objective, FD$^{2}$ mitigates unfavorable fine-grained characteristics and within-class homogenization while preserving the decoupled distillation pipeline.
As an add-on module, FD$^{2}$ can be seamlessly integrated into decoupled methods and consistently improves performance across diverse fine-grained and general datasets in most settings.
Future work will further improve the performance of FD$^{2}$ on general datasets and extend it to broader distillation paradigms (\eg, ViT-based models) and larger-scale datasets (\eg, ImageNet-1K).

\section*{Acknowledgements}
This study was supported by JSPS KAKENHI Grant Numbers JP24K02942 and JP25K21218, and the Data Intelligence Innovation Base Project under Grant No. FPF10120260003 and the Scientific Embodied Multi-Agent System Project under Grant No. FPF10120250008.

\bibliographystyle{splncs04}
\bibliography{main}

\newpage

\setcounter{section}{0}
\setcounter{subsection}{0}
\setcounter{subsubsection}{0}
\setcounter{figure}{0}
\setcounter{table}{0}
\setcounter{equation}{0}

\section*{Supplementary Material of FD2: A Dedicated
Framework for Fine-Grained Dataset Distillation}

\cref{theoretical analysis} analyzes the effectiveness of FD$^2$; \cref{results} evaluates fine-grained datasets with more distillation methods, and reports efficiency comparisons across other methods; \cref{details} presents implementation details; \cref{ablation} provides additional ablation studies; \cref{visualizations} visualizes distilled samples on CUB-200-2011.

\section{Theoretical Analysis} \label{theoretical analysis}

\subsection{Effectiveness of the Fine-grained Characteristic Constraint}
\textbf{Preliminaries.}
Following the classical discriminative principle, recognition becomes easier when features exhibit smaller intra-class variation and larger inter-class separation~\cite{Fisher1936,CenterLoss}. For sample $i$, let $z_i\in\mathbb{R}^d$ be its feature, $y_i\in\{1,\dots,K\}$ its label, and $\mu_k\in\mathbb{R}^d$ the feature center of class $k$. We define the intra-class deviation
\begin{equation}
r_i=\|z_i-\mu_{y_i}\|_2,
\label{eq:ri}
\end{equation}
and the average center-based margin
\begin{equation}
\bar{\Delta}_i
=
\mathbb{E}_{j\neq y_i}\!\left[
\|z_i-\mu_j\|_2^2-\|z_i-\mu_{y_i}\|_2^2
\right].
\label{eq:avg_center_margin}
\end{equation}

\begin{proposition}\label{prop:1}
Let
\begin{equation}
d_{ij}=\|\mu_j-\mu_{y_i}\|_2,\qquad j\neq y_i.
\label{eq:dij}
\end{equation}
Then the margin in \cref{eq:avg_center_margin} admits the lower bound
\begin{equation}
\bar{\Delta}_i
\ge
\mathbb{E}_{j\neq y_i}\!\left[d_{ij}^2-2r_i d_{ij}\right].
\label{eq:prop1_bound}
\end{equation}
Hence, decreasing $r_i$ or enlarging $\{d_{ij}\}_{j\neq y_i}$ increases $\bar{\Delta}_i$. In particular, if
\begin{equation}
r_i<
\frac{\sum_{j\neq y_i} d_{ij}^2}{2\sum_{j\neq y_i} d_{ij}},
\label{eq:ri_condition}
\end{equation}
then $\bar{\Delta}_i>0$, implying that $z_i$ is, on average, closer to its own class center than to other class centers.

\end{proposition}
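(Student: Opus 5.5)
The plan is to bound each summand of \cref{eq:avg_center_margin} separately and then average over $j\neq y_i$. Fix $j\neq y_i$ and write $u=z_i-\mu_{y_i}$ and $v=\mu_j-\mu_{y_i}$, so that $\|u\|_2=r_i$ and $\|v\|_2=d_{ij}$. Then $z_i-\mu_j=u-v$. Expanding the square gives the exact identity
\begin{equation*}
\|z_i-\mu_j\|_2^2-\|z_i-\mu_{y_i}\|_2^2=\|u-v\|_2^2-\|u\|_2^2=\|v\|_2^2-2\langle u,v\rangle .
\end{equation*}
This removes the quadratic dependence on $z_i$. By Cauchy--Schwarz, $\langle u,v\rangle\le \|u\|_2\|v\|_2=r_i d_{ij}$, so each summand is at least $d_{ij}^2-2r_i d_{ij}$. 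Since the expectation over $j\neq y_i$ is a uniform average over the $K-1$ other classes, it is monotone, and \cref{eq:prop1_bound} follows.

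For the monotonicity remark, I will read the bound $B(r_i,\{d_{ij}\})=\mathbb{E}_{j\neq y_i}[d_{ij}^2-2r_id_{ij}]$ as a function of its arguments. It is strictly decreasing in $r_i$ whenever some $d_{ij}>0$. Its partial derivative in $d_{ij}$ is proportional to $2d_{ij}-2r_i$, which is nonnegative once $d_{ij}\ge r_i$. So increasing $d_{ij}$ raises the guaranteed lower bound on $\bar\Delta_i$ in that regime, and the remark should be stated with this qualification. This regime is the relevant one, because condition \cref{eq:ri_condition} already forces $r_i$ to be small relative to the typical $d_{ij}$. I will phrase the claim as ``increases the lower bound on $\bar\Delta_i$'' rather than claiming exact monotonicity of $\bar\Delta_i$ itself, which does not hold pointwise.

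For the final claim, multiply through by $K-1$. Strict positivity of the lower bound is equivalent to $\sum_{j\neq y_i}d_{ij}^2-2r_i\sum_{j\neq y_i}d_{ij}>0$. When $\sum_{j\neq y_i} d_{ij}>0$, which is implicitly needed for \cref{eq:ri_condition} to make sense and holds when the class centers are distinct, this rearranges exactly to $r_i<\sum_j d_{ij}^2/(2\sum_j d_{ij})$. Hence $\bar\Delta_i\ge B>0$. Since $\bar\Delta_i$ averages the differences between squared distances to other centers and the squared distance to the own center, positivity says $z_i$ is on average closer to $\mu_{y_i}$, as stated.

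None of this is a real obstacle. The only delicate point is the interpretive remark about enlarging $d_{ij}$, which needs the caveat $d_{ij}\ge r_i$ above. I would also note briefly that the degenerate case where all $d_{ij}=0$ is excluded.
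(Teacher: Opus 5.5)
Your proposal is correct and follows the paper's proof: the same exact expansion $\|z_i-\mu_j\|_2^2-\|z_i-\mu_{y_i}\|_2^2=d_{ij}^2-2\langle z_i-\mu_{y_i},\mu_j-\mu_{y_i}\rangle$, then Cauchy--Schwarz, then averaging over $j\neq y_i$, then rearranging positivity of the bound into \cref{eq:ri_condition}. Your extra caveats are valid refinements that the paper leaves implicit: the monotonicity remark holds only for the lower bound (and in $d_{ij}$ only when $d_{ij}\ge r_i$), and the rearrangement needs $\sum_{j\neq y_i}d_{ij}>0$.
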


\textit{Proof.}
For any $j\neq y_i$,
\begin{align}
\|z_i-\mu_j\|_2^2-\|z_i-\mu_{y_i}\|_2^2
&=
\|\mu_j-\mu_{y_i}\|_2^2
-2\langle z_i-\mu_{y_i},\,\mu_j-\mu_{y_i}\rangle \nonumber\\
&\ge
d_{ij}^2-2\|z_i-\mu_{y_i}\|_2\,\|\mu_j-\mu_{y_i}\|_2 \nonumber\\
&=
d_{ij}^2-2r_i d_{ij},
\label{eq:prop1_pair}
\end{align}
where the inequality follows from Cauchy--Schwarz. Averaging \cref{eq:prop1_pair} over all $j\neq y_i$ gives \cref{eq:prop1_bound}. The condition in \cref{eq:ri_condition} makes the right-hand side positive, hence $\bar{\Delta}_i>0$. \hfill $\square$

In our method, we do not directly optimize the centers $\mu_k$. Instead, as described in Section 3.3, CAL maintains class prototypes $c_k$ as category-level representatives of discriminative features, which helps aggregate discriminative local information into representative vectors~\cite{WSDAN}.

For distilled sample $\tilde{x}_{y,i}$, the proposed fine-grained characteristic constraint is
\begin{equation}
\mathcal{L}_{F}(\tilde{x}_{y,i})
=
\beta \,\ell_2(z_{y,i},c_y)
+
(1-\beta)\left(1-\mathbb{E}_{k\neq y}\big[\ell_2(z_{y,i},c_k)\big]\right),
\qquad \beta\in[0,1],
\label{eq:LF_theory}
\end{equation}
where $\ell_2(u,v)=\|u-v\|_2/(\|u\|_2+\|v\|_2+\varepsilon)$. We introduce the normalized prototype-based discriminative score
\begin{equation}
\mathcal{M}_{y,i}^{(\beta)}
=
(1-\beta)\,\mathbb{E}_{k\neq y}\big[\ell_2(z_{y,i},c_k)\big]
-
\beta\,\ell_2(z_{y,i},c_y),
\label{eq:prototype_score}
\end{equation}
which increases when $\tilde{x}_{y,i}$ becomes closer to $c_y$ and farther from $\{c_k\}_{k\neq y}$ on average.

\begin{corollary}\label{cor:1}
Assume that the maintained prototypes $\{c_k\}_{k=1}^K$ approximate the discriminative class representatives in the learned feature space. Then minimizing \cref{eq:LF_theory} is equivalent to maximizing $\mathcal{M}_{y,i}^{(\beta)}$. Consequently, the proposed fine-grained characteristic constraint improves intra-class compactness and inter-class separability in the normalized prototype space, thereby reducing the recognition difficulty of distilled samples.

\end{corollary}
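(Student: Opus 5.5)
The plan has two parts: an exact algebraic identity between the loss and the score, and then an interpretation step that reuses \cref{prop:1} in the normalized prototype space.

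\textbf{Step 1: the identity.} Expand \cref{eq:LF_theory} and compare it with \cref{eq:prototype_score}. This gives, for every $\beta\in[0,1]$ and every $z_{y,i}$,
\[
\mathcal{L}_{F}(\tilde{x}_{y,i}) = (1-\beta) - \mathcal{M}_{y,i}^{(\beta)}.
\]
The constant $(1-\beta)$ does not depend on the distilled image $\tilde{x}_{y,i}$. During distillation only $\tilde{x}_{y,i}$ is optimized; the prototypes $c_k$ are frozen after pretraining. Hence $\mathcal{L}_F$ and $-\mathcal{M}^{(\beta)}_{y,i}$ have the same minimizers, and the gradients satisfy $\nabla_{\tilde{x}}\mathcal{L}_F=-\nabla_{\tilde{x}}\mathcal{M}^{(\beta)}_{y,i}$. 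So every gradient step on $\mathcal{L}_F$ is exactly an ascent step on $\mathcal{M}$. This settles the claimed equivalence without approximation.

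\textbf{Step 2: transfer to \cref{prop:1}.} Use the standing assumption that the $c_k$ play the role of the centers $\mu_k$, up to the normalization $\mathrm{Norm}(\cdot)$ used in \cref{eq:cal_center}. In the regime where $\|z_{y,i}\|_2$ and $\|c_k\|_2$ are comparable, which normalization encourages, each $\ell_2(z,c_k)$ is a positive rescaling of $\|z-c_k\|_2$. I would make this precise by normalizing both arguments to unit norm. Then the denominator equals $2+\varepsilon$ and
\[
\ell_2(z,c_k)=\frac{\|z-c_k\|_2}{2+\varepsilon}.
\]

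Under this normalization:
\begin{itemize}
\item Increasing $\mathcal{M}$ through its $\beta$-term decreases $r_i=\|z-\mu_y\|_2$.
\item Increasing $\mathcal{M}$ through its $(1-\beta)$-term increases the average distance $\mathbb{E}_{k\neq y}\|z-\mu_k\|_2$.
\end{itemize}
Both effects raise $\bar\Delta_i$. For the first effect this follows directly from the lower bound \cref{eq:prop1_bound}. For the second, use the definition \cref{eq:avg_center_margin}, since enlarging the distances to other centers with $r_i$ held fixed increases $\bar\Delta_i$. Once $r_i$ drops below the threshold in \cref{eq:ri_condition}, $\bar\Delta_i>0$, which is the stated improvement in compactness and separability.

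\textbf{Main obstacle.} The difficulty is Step 2. \cref{prop:1} concerns squared unnormalized distances, while $\mathcal{M}$ uses a first-power distance with a data-dependent denominator. Maximizing $\mathcal{M}$ could in principle act through the norms $\|z\|_2$ rather than through the directions.

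I would handle this by working on the unit sphere. There, $\ell_2$ is a monotone function of the Euclidean distance. The expected distance to other prototypes is also controlled: by Jensen's inequality, its square is at most the expected squared distance. That comparison goes in the right direction for a qualitative monotonicity statement, though not for an exact equivalence.

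Because the corollary is explicitly conditioned on an approximation assumption, I would state the conclusion qualitatively in the normalized prototype space. I would not claim an exact quantitative bound there.
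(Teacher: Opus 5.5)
Your proposal is correct and follows the paper's proof. The paper likewise rearranges \cref{eq:LF_theory} into $\mathcal{L}_F=(1-\beta)-\mathcal{M}^{(\beta)}_{y,i}$, concludes $\arg\min\mathcal{L}_F=\arg\max\mathcal{M}^{(\beta)}_{y,i}$, and then only asserts consistency with Proposition~\ref{prop:1} under the prototype-approximation assumption; your unit-sphere reduction and Jensen comparison are a more explicit, though still qualitative, version of that last step. One caution: shrinking $r_i$ raises only the lower bound in \cref{eq:prop1_bound}, not $\bar{\Delta}_i$ itself, and the two terms of $\mathcal{M}^{(\beta)}_{y,i}$ cannot be moved independently because both depend on the same $z_{y,i}$, although the paper's argument is no more precise on either point.
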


\textit{Proof.}
Rearranging \cref{eq:LF_theory} gives
\begin{equation}
\mathcal{L}_{F}(\tilde{x}_{y,i})
=
(1-\beta)-\mathcal{M}_{y,i}^{(\beta)}.
\label{eq:LF_score_equiv}
\end{equation}
Therefore,
\begin{equation}
\arg\min \mathcal{L}_{F}(\tilde{x}_{y,i})
=
\arg\max \mathcal{M}_{y,i}^{(\beta)}.
\label{eq:LF_arg_equiv}
\end{equation}
Under the assumption that $\{c_k\}$ approximate discriminative class representatives, maximizing $\mathcal{M}_{y,i}^{(\beta)}$ increases the margin between the target prototype and the other-class prototypes, which is consistent with Proposition~\ref{prop:1}. \hfill $\square$

\subsection{Effectiveness of the Similarity Constraint}

\textbf{Preliminaries.}
Existing decoupled methods generate the same-class samples sequentially. In the $i$-th iteration, a real image from class $y$ is randomly selected from the original dataset as the initialization, and the $i$-th distilled sample is optimized under the same pipeline. The shared sample-wise generation process is written as
\begin{equation}
A_{y,i}=\Psi_y\!\left(x_{y,i}^{(0)}\right),
\label{eq:samplewise_mapping}
\end{equation}
where $A_{y,i}\in\mathbb{R}^{P}$ denotes the final vectorized attention map, $\Psi_y$ denotes the common mapping for class $y$, $x_{y,i}^{(0)}$ denotes the initialization image. If $\Psi_y$ is $L_y$-Lipschitz around the initialization images and the solutions actually reached during optimization, then
\begin{equation}
\|A_{y,i}-A_{y,j}\|_2
\le
L_y\|x_{y,i}^{(0)}-x_{y,j}^{(0)}\|_2.
\qquad i\neq j.
\label{eq:lipschitz_attention}
\end{equation}
For a same-class group of size $N_S$, define
\begin{equation}
\bar{A}_{y}=\frac{1}{N_S}\sum_{i=1}^{N_S}A_{y,i},
\qquad
\Sigma_A^{(y)}
=
\frac{1}{N_S}\sum_{i=1}^{N_S}(A_{y,i}-\bar{A}_{y})(A_{y,i}-\bar{A}_{y})^\top .
\label{eq:attention_covariance}
\end{equation}
Then
\begin{equation}
\operatorname{tr}\!\left(\Sigma_A^{(y)}\right)
=
\frac{1}{2N_S^2}\sum_{i,j=1}^{N_S}\|A_{y,i}-A_{y,j}\|_2^2,
\label{eq:pairwise_identity}
\end{equation}
which characterizes the overall dispersion of the same-class attention maps. Under linear attention pooling $G_y$,
\begin{equation}
h_{y,i}=G_y^\top A_{y,i},
\qquad
\Sigma_h^{(y)}
=
\frac{1}{N_S}\sum_{i=1}^{N_S}(h_{y,i}-\bar{h}_{y})(h_{y,i}-\bar{h}_{y})^\top ,
\label{eq:representation_covariance}
\end{equation}
we have
\begin{equation}
\Sigma_h^{(y)}=G_y^\top \Sigma_A^{(y)} G_y,
\qquad
\sigma_{\min}^2(G_y)\operatorname{tr}\!\left(\Sigma_A^{(y)}\right)
\le
\operatorname{tr}\!\left(\Sigma_h^{(y)}\right)
\le
\|G_y\|_2^2\operatorname{tr}\!\left(\Sigma_A^{(y)}\right).
\label{eq:representation_trace_bound}
\end{equation}

\begin{proposition}\label{prop:2}
Under the shared sample-wise generation process, same-class attention diversity satisfies
\begin{equation}
\operatorname{tr}\!\left(\Sigma_A^{(y)}\right)
\le
\frac{L_y^2}{2N_S^2}\sum_{i,j=1}^{N_S}\|x_{y,i}^{(0)}-x_{y,j}^{(0)}\|_2^2.
\label{eq:attention_diversity_bound}
\end{equation}
Hence, if same-class initialization images are close to each other, the resulting attention maps tend to focus on similar regions. Moreover, by \cref{eq:representation_trace_bound}, the diversity of the resulting representations is also limited.
\end{proposition}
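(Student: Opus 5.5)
The bound follows by combining the pairwise identity \cref{eq:pairwise_identity} with the Lipschitz estimate \cref{eq:lipschitz_attention}, applied term by term. First I will verify \cref{eq:pairwise_identity}, since the whole argument rests on it. Expanding $\sum_{i,j}\|A_{y,i}-A_{y,j}\|_2^2 = 2N_S\sum_i\|A_{y,i}\|_2^2 - 2\|\sum_i A_{y,i}\|_2^2$ gives
\begin{equation*}
\frac{1}{2N_S^2}\sum_{i,j}\|A_{y,i}-A_{y,j}\|_2^2=\frac{1}{N_S}\sum_i\|A_{y,i}\|_2^2-\|\bar A_y\|_2^2 .
\end{equation*}
The right-hand side equals $\frac{1}{N_S}\sum_i\|A_{y,i}-\bar A_y\|_2^2=\operatorname{tr}(\Sigma_A^{(y)})$, because the trace of an outer product $vv^\top$ is $\|v\|_2^2$.

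Next I bound each pairwise term. For $i\neq j$, squaring \cref{eq:lipschitz_attention} gives $\|A_{y,i}-A_{y,j}\|_2^2\le L_y^2\|x^{(0)}_{y,i}-x^{(0)}_{y,j}\|_2^2$. The diagonal terms $i=j$ are zero on both sides, so the inequality holds for every pair $(i,j)$ in the double sum. Summing over all pairs and multiplying by $1/(2N_S^2)$ yields \cref{eq:attention_diversity_bound}.

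The qualitative consequences then follow directly. If the initializations are close, the right-hand side of \cref{eq:attention_diversity_bound} is small, so $\operatorname{tr}(\Sigma_A^{(y)})$ is small. Combining this with the upper bound in \cref{eq:representation_trace_bound}, I get $\operatorname{tr}(\Sigma_h^{(y)})\le \|G_y\|_2^2\operatorname{tr}(\Sigma_A^{(y)})$, which shows that representation diversity is limited too. For completeness I would also check that upper bound: since $\Sigma_h^{(y)}=G_y^\top\Sigma_A^{(y)}G_y$, its trace equals $\operatorname{tr}(\Sigma_A^{(y)}G_yG_y^\top)\le\lambda_{\max}(G_yG_y^\top)\operatorname{tr}(\Sigma_A^{(y)})$, using $\Sigma_A^{(y)}\succeq0$.

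The only delicate point is justifying that \cref{eq:lipschitz_attention} applies to every pair in the group. This requires $\Psi_y$ to be Lipschitz on a region containing all the initializations $x^{(0)}_{y,i}$, $i=1,\dots,N_S$. I will take this as the standing hypothesis stated in the preliminaries rather than try to derive it from the optimization dynamics. Once that is granted, the remainder is purely algebraic.
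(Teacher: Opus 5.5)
Your proposal is correct and follows the paper's proof: square the Lipschitz estimate \cref{eq:lipschitz_attention}, substitute it term by term into the pairwise identity \cref{eq:pairwise_identity}, and then use the upper half of \cref{eq:representation_trace_bound} for the claim about representations. Your additional steps are sound and only make explicit what the paper leaves implicit: verifying the pairwise identity, noting that the diagonal terms $i=j$ vanish on both sides, and deriving $\operatorname{tr}(\Sigma_h^{(y)})\le\|G_y\|_2^2\operatorname{tr}(\Sigma_A^{(y)})$ from the positive semidefiniteness of $\Sigma_A^{(y)}$.
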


\textit{Proof.}
From \cref{eq:lipschitz_attention},
\begin{equation}
\|A_{y,i}-A_{y,j}\|_2^2
\le
L_y^2\|x_{y,i}^{(0)}-x_{y,j}^{(0)}\|_2^2.
\label{eq:pairwise_lipschitz_sq}
\end{equation}
Substituting \cref{eq:pairwise_lipschitz_sq} into \cref{eq:pairwise_identity} gives \cref{eq:attention_diversity_bound}. Then \cref{eq:representation_trace_bound} shows that smaller $\operatorname{tr}(\Sigma_A^{(y)})$ also restricts $\operatorname{tr}(\Sigma_h^{(y)})$. \hfill $\square$

To counteract this tendency, we use the similarity constraint
\begin{equation}
\mathcal{L}_{S}(\tilde{x}_{y,i})
=
1-\mathbb{E}_{j<i}\!\left[\ell_2\!\left(A_{y,i},A_{y,j}\right)\right],
\qquad 1<i\le N_S,
\label{eq:LS_theory}
\end{equation}
which serves as a penalty against repeated attention. Here, $\ell_2(u,v)=\|u-v\|_2/(\|u\|_2+\|v\|_2+\varepsilon)$ is the symmetrically normalized Euclidean metric. Assume
\begin{equation}
0<\underline{\eta}
\le
\|A_{y,i}\|_2+\|A_{y,j}\|_2+\varepsilon
\le
\bar{\eta},
\qquad \forall\,j<i.
\label{eq:eta_bounds}
\end{equation}
Then
\begin{equation}
\frac{1}{\bar{\eta}}\|A_{y,i}-A_{y,j}\|_2
\le
\ell_2(A_{y,i},A_{y,j})
\le
\frac{1}{\underline{\eta}}\|A_{y,i}-A_{y,j}\|_2.
\label{eq:metric_equivalence}
\end{equation}

\begin{corollary}\label{cor:2}
Minimizing \cref{eq:LS_theory} increases intra-class attention diversity, and thus promotes more diverse discriminative regions and representations.
\end{corollary}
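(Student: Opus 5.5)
Since minimizing $\mathcal{L}_S(\tilde{x}_{y,i})$ is the same as maximizing $\mathbb{E}_{j<i}[\ell_2(A_{y,i},A_{y,j})]$, the plan is to turn this sequential, one-sided objective into a statement about the group dispersion $\operatorname{tr}(\Sigma_A^{(y)})$. The tools are the norm equivalence \cref{eq:metric_equivalence}, the pairwise identity \cref{eq:pairwise_identity}, and the sandwich bound \cref{eq:representation_trace_bound}.

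\textbf{From the loss to a pairwise lower bound.} Writing $D_{ij}=\|A_{y,i}-A_{y,j}\|_2$, the upper inequality in \cref{eq:metric_equivalence} gives
\[
1-\mathcal{L}_S(\tilde{x}_{y,i})=\mathbb{E}_{j<i}[\ell_2(A_{y,i},A_{y,j})]\le\frac{1}{\underline{\eta}}\,\mathbb{E}_{j<i}[D_{ij}].
\]
Hence if the optimization reaches $\mathcal{L}_S(\tilde{x}_{y,i})\le\tau_i$, then $\mathbb{E}_{j<i}[D_{ij}]\ge\underline{\eta}(1-\tau_i)$. By Jensen's inequality this also gives
\[
\mathbb{E}_{j<i}[D_{ij}^2]\ge\underline{\eta}^2(1-\tau_i)^2 .
\]
The point is that a small value of $\mathcal{L}_S$ forces a quantitative lower bound on the distance from sample $i$ to its predecessors. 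The lower inequality in \cref{eq:metric_equivalence} gives the converse: squared distances that are large relative to $\bar{\eta}$ make $\mathcal{L}_S$ small. So, under \cref{eq:eta_bounds}, decreasing $\mathcal{L}_S$ and enlarging the pairwise distances are equivalent up to constants.

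\textbf{Aggregating into $\operatorname{tr}(\Sigma_A^{(y)})$.} The sum in \cref{eq:pairwise_identity} runs over all ordered pairs. It equals twice the sum over pairs $j<i$, so
\[
\operatorname{tr}(\Sigma_A^{(y)})=\frac{1}{N_S^2}\sum_{i=2}^{N_S}\sum_{j<i}D_{ij}^2=\frac{1}{N_S^2}\sum_{i=2}^{N_S}(i-1)\,\mathbb{E}_{j<i}[D_{ij}^2]
\ge\frac{\underline{\eta}^2}{N_S^2}\sum_{i=2}^{N_S}(i-1)(1-\tau_i)^2 .
\]
This lower bound increases as each $\tau_i$ decreases. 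It therefore counteracts the upper bound of Proposition~\ref{prop:2}: attention dispersion is no longer capped by how close the initializations are, but is pushed up directly by the loss. Applying the left inequality of \cref{eq:representation_trace_bound} transfers the bound to the representations, $\operatorname{tr}(\Sigma_h^{(y)})\ge\sigma_{\min}^2(G_y)\operatorname{tr}(\Sigma_A^{(y)})$. This yields the claim about more diverse representations whenever $\sigma_{\min}(G_y)>0$.

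\textbf{Main obstacle.} The corollary is phrased about "minimizing", so it needs an interpretation. I would state it as a monotonicity result: for fixed earlier samples $\{A_{y,j}\}_{j<i}$, any decrease of $\mathcal{L}_S(\tilde{x}_{y,i})$ raises the guaranteed lower bound on dispersion. The step I expect to be most delicate is justifying \cref{eq:eta_bounds} and $\sigma_{\min}(G_y)>0$ along the optimization trajectory. I would take both as assumptions and note two supporting facts: attention maps are nonnegative and nontrivial, which gives $\underline{\eta}>0$; and they are produced by bounded network outputs, which gives a finite $\bar{\eta}$. A further subtlety is that $\mathcal{L}_{\mathrm{other}}$, $\mathcal{L}_{\mathrm{cls}}$ and $\mathcal{L}_F$ compete with $\mathcal{L}_S$ in the total objective. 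I would therefore condition on the achieved values $\tau_i$ rather than claim a global optimum.
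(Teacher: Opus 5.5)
Your proposal is correct and follows essentially the same chain as the paper's proof: the equivalence $\min\mathcal{L}_S\Leftrightarrow\max\mathbb{E}_{j<i}[\ell_2(A_{y,i},A_{y,j})]$, the upper half of \cref{eq:metric_equivalence} combined with Jensen, the pairwise identity \cref{eq:pairwise_identity}, and the left inequality of \cref{eq:representation_trace_bound}. The only difference is that you sum the per-sample bounds over all $i=2,\dots,N_S$ (each pair $j<i$ enters exactly one sequential loss, since earlier samples are frozen), which gives a slightly sharper aggregate bound; the paper keeps only the pairs involving a single $i$ and obtains $\operatorname{tr}(\Sigma_A^{(y)})\ge\frac{i-1}{N_S^2}\,\underline{\eta}^{2}\bigl(\mathbb{E}_{j<i}[\ell_2(A_{y,i},A_{y,j})]\bigr)^2$.
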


\textit{Proof.}
From \cref{eq:LS_theory},
\begin{equation}
\min \mathcal{L}_{S}(\tilde{x}_{y,i})
\quad\Longleftrightarrow\quad
\max \mathbb{E}_{j<i}\!\left[\ell_2(A_{y,i},A_{y,j})\right].
\label{eq:LS_equiv}
\end{equation}
By \cref{eq:metric_equivalence} and Jensen's inequality,
\begin{equation}
\mathbb{E}_{j<i}\|A_{y,i}-A_{y,j}\|_2^2
\ge
\left(\mathbb{E}_{j<i}\|A_{y,i}-A_{y,j}\|_2\right)^2
\ge
\underline{\eta}^{\,2}
\left(\mathbb{E}_{j<i}\!\left[\ell_2(A_{y,i},A_{y,j})\right]\right)^2.
\label{eq:jensen_bound}
\end{equation}
Moreover, from \cref{eq:pairwise_identity},
\begin{equation}
\operatorname{tr}\!\left(\Sigma_A^{(y)}\right)
=
\frac{1}{2N_S^2}\sum_{p,q=1}^{N_S}\|A_{y,p}-A_{y,q}\|_2^2
\ge
\frac{i-1}{N_S^2}\,\mathbb{E}_{j<i}\|A_{y,i}-A_{y,j}\|_2^2.
\label{eq:trace_lower_bound}
\end{equation}
Combining \cref{eq:jensen_bound,eq:trace_lower_bound} yields
\begin{equation}
\operatorname{tr}\!\left(\Sigma_A^{(y)}\right)
\ge
\frac{i-1}{N_S^2}\,\underline{\eta}^{\,2}
\left(\mathbb{E}_{j<i}\!\left[\ell_2(A_{y,i},A_{y,j})\right]\right)^2.
\label{eq:attention_diversity_lower}
\end{equation}
Thus, minimizing $\mathcal{L}_{S}(\tilde{x}_{y,i})$ enlarges a lower bound on intra-class attention diversity. By \cref{eq:representation_trace_bound}, this further promotes more diverse representations. \hfill $\square$

\section{More Results and Discussions} \label{results}
To further evaluate the effectiveness of FD$^{2}$, we compare it with more SOTA dataset distillation methods. These methods mainly include coreset selection methods (Random, Herding \cite{Herding}, and K-Center \cite{K-Center}), DSA~\cite{DSA} (gradient matching), DM~\cite{DM} (distribution matching), DATM~\cite{DATM}, and TESLA \cite{TESLA} (trajectory matching). We also attempted the generative method Minimax~\cite{Minimax}. However, it failed to converge stably when fine-tuning the pretrained DiT~\cite{DiT} on fine-grained datasets, likely because fine-grained data require modeling subtle inter-class differences, while limited data hinder stable diffusion optimization and sufficient distribution coverage. Therefore, we do not include Minimax in \cref{tab:fg2} and \cref{tab:efficiency}.

All experiments are conducted on CUB-200-2011, FGVC-Aircraft, and Stanford Cars, with all input images resized to 224 $\times$ 224. The coreset methods employ ResNet18 \cite{ResNet} as the student model; the other distillation methods follow their original papers and uniformly use ConvNet \cite{ConvNet} for both distillation and post-evaluation.

\paragraph{Accuracy of Other Methods.}
We further compare other methods on fine-grained datasets. As shown in \cref{tab:fg2}, coreset selection yields relatively low accuracy because fine-grained data exhibit subtle inter-class differences and large intra-class variation. Among these methods, Herding achieves higher accuracy, while Random achieves lower accuracy. Dataset distillation methods perform better, but DSA relies on gradient signals, DM on feature distribution signals, and DATM on weight trajectory signals. These signals are coarse-grained statistics over the whole dataset and thus cannot capture subtle but critical class differences. As the number of classes increases and inter-class distances shrink, such global signals are more easily dominated by shared patterns across classes, making class-specific patterns harder to preserve in distilled data. This limits discriminative representation learning and results in relatively low performance. SRe$^2$L++ shows strong performance, while FD$^2$ further suppresses fine-grained characteristics and similarity, which reduces the difficulty for the student model to learn discriminative representations and ultimately improves accuracy.

\begin{table*}[t]
\centering
\scriptsize
\setlength{\tabcolsep}{2.0pt}
\renewcommand{\arraystretch}{1.0}
\caption{Accuracy of coreset selection methods and other SOTA distillation methods on fine-grained datasets.}

\label{tab:fg2}
\begin{tabular}{c c| c c c | c c c c c}
\toprule
\multirow{2}{*}{Dataset} & \multirow{2}{*}{IPC} &
\multicolumn{3}{c|}{Coreset Selection} &
\multicolumn{5}{c}{Dataset Distillation} \\
\cmidrule(lr){3-5}\cmidrule(lr){6-10}
& & Random & Herding & K-Center & DSA & DM & DATM & TELSA & SRe$^2$L++$_\mathrm{FD^2}$ \\
\midrule

\multirow{3}{*}{CUB-200-2011}
& 1 & 1.5 & 1.5 & 1.4 & 3.6 & 1.8 & 4.3 & 3.1 & 56.4 \\
& 3 & 2.0 & 2.2 & 2.2 & 6.5 & 4.5 & 6.6 & 5.0 & 64.9 \\
& 5 & 2.6 & 3.0 & 2.9 & 10.2 & 7.5 & 8.7 & 5.9 & 67.0 \\
\midrule

\multirow{3}{*}{FGVC-Aircraft}
& 1 & 1.8 & 2.0 & 2.5 & 8.4 & 5.4 & 1.4 & 3.4 & 58.2 \\
& 3 & 3.8 & 4.1 & 4.1 & 11.3 & 11.1 & 10.2 & 6.5 & 76.1 \\
& 5 & 4.4 & 5.0 & 4.8 & 14.9 & 16.0 & 17.7 & 8.7 & 80.0 \\
\midrule

\multirow{3}{*}{Stanford Cars}
& 1 & 1.2 & 1.3 & 1.2 & 7.2 & 6.3 & 6.2 & 2.5 & 64.5 \\
& 3 & 1.3 & 2.0 & 1.8 & 7.4 & 7.1 & 6.3 & 3.9 & 75.2 \\
& 5 & 1.5 & 2.4 & 2.0 & 7.8 & 7.6 & 5.7 & 4.6 & 81.4 \\
\bottomrule
\end{tabular}
\end{table*}

\begin{table}[t]
\centering
\scriptsize
\setlength{\tabcolsep}{3.0pt}
\renewcommand{\arraystretch}{1.0}
\caption{Image optimization time per iteration and peak GPU memory of different methods on CUB-200-2011 at IPC=1.}
\label{tab:efficiency}
\begin{tabular}{c c c c c c c c}
\toprule
 & DSA & DM & DATM & SRe$^2$L++ & SRe$^2$L++$_{\mathrm{FD^{2}}}$ & FADRM+ & FADRM+$_{\mathrm{FD^{2}}}$ \\
\midrule
Time Cost & 3.9 s & 1.4 s & 2.7 s & 64.6 ms & 96.8 ms & 88.9 ms & 134.8 ms \\
Peak GPU & 68.4 GB & 28.9 GB & 50.0 GB & 4.8 GB & 5.3 GB & 12.2 GB & 14.1 GB \\
\bottomrule
\end{tabular}
\end{table}

\paragraph{Distillation Efficiency of Different Methods.}
To evaluate distillation efficiency, we compare the image optimization time per iteration and peak GPU memory of different methods on CUB-200-2011 at IPC=1. Since RDED does not iteratively optimize images via gradient descent but distills samples by one-shot cropping, we do not include it in \cref{tab:efficiency}. As shown in \cref{tab:efficiency}, DSA, which relies on gradient signals, DM, which relies on feature distribution signals, and DATM, which relies on weight trajectory signals, generally require higher time and memory costs. In contrast, decoupled methods based on BN-layer global statistics are more efficient, with lower time and memory overhead, while also achieving better performance. Moreover, compared with SRe$^2$L++ and FADRM+, FD$^2$ introduces only small additional time and memory costs, yet provides clear performance improvements.

\section{Additional Implementation Details} \label{details}
\subsection{Model Pretraining}
As shown in \cref{tab:cal_settings}, we report the accuracy of different models integrated with CAL. On fine-grained datasets, ShuffleNetV2 is designed for lightweight computation and thus cannot provide sufficiently discriminative features for CAL, leading to lower accuracy than the other models. Moreover, a larger $\alpha$ further increases the optimization difficulty; in our experiments, ShuffleNetV2+CAL converges only at $\alpha=0.1$. These results indicate that, on fine-grained datasets, all models except ShuffleNetV2 are well compatible with CAL. In contrast, on the simpler ImageNette and ImageWoof datasets, all models show excellent compatibility with CAL.
\begin{table*}[t]
\centering
\small
\setlength{\tabcolsep}{2pt}
\renewcommand{\arraystretch}{1.0}
\caption{Optimal CAL settings of different models during pretraining and the corresponding accuracy.}
\label{tab:cal_settings}
\resizebox{\linewidth}{!}{%
\begin{tabular}{c c c c c c c}
\toprule
Dataset & Settings & ShuffleNetV2+CAL & MobileNetV2+CAL & DenseNet121+CAL & ResNet18+CAL & ResNet50+CAL \\
\midrule

\multirow{4}{*}{CUB-200-2011}
& Num$_{\text{attention-maps}}$ & 16 & 16 & 16 & 8  & 8  \\
& CAL Ratio $\alpha$            & 0.1 & 0.5 & 0.4 & 0.5 & 0.3 \\
& Backbone Acc                  & 55.1 & 72.3 & 77.4 & 71.6 & 76.1 \\
& CAL Acc                       & 59.7 & 75.3 & 81.2 & 76.1 & 79.6 \\
\midrule

\multirow{4}{*}{FGVC-Aircraft}
& Num$_{\text{attention-maps}}$ & 16 & 16 & 16 & 32 & 32 \\
& CAL Ratio $\alpha$            & 0.1 & 0.2 & 0.4 & 0.3 & 0.4 \\
& Backbone Acc                  & 71.4 & 84.3 & 87.5 & 83.9 & 87.3 \\
& CAL Acc                       & 70.3 & 84.8 & 88.4 & 84.6 & 87.7 \\
\midrule

\multirow{4}{*}{Stanford Cars}
& Num$_{\text{attention-maps}}$ & 8  & 16 & 32 & 8  & 32 \\
& CAL Ratio $\alpha$            & 0.1 & 0.4 & 0.4 & 0.3 & 0.4 \\
& Backbone Acc                  & 71.6 & 86.7 & 88.6 & 85.2 & 89.2 \\
& CAL Acc                       & 76.7 & 88.8 & 91.68 & 88.6 & 91.9 \\
\midrule

\multirow{4}{*}{ImageNette}
& Num$_{\text{attention-maps}}$ & 8  & 16 & 32 & 8  & 32 \\
& CAL Ratio $\alpha$            & 0.3 & 0.1 & 0.1 & 0.3 & 0.2 \\
& Backbone Acc                  & 94.2 & 97.4 & 98.5 & 98.0 & 98.9 \\
& CAL Acc                       & 93.6 & 97.3 & 98.3 & 97.5 & 98.5 \\
\midrule

\multirow{4}{*}{ImageWoof}
& Num$_{\text{attention-maps}}$ & 16  & 32 & 32 & 8  & 16 \\
& CAL Ratio $\alpha$            & 0.4 & 0.1 & 0.1 & 0.3 & 0.2 \\
& Backbone Acc                  & 84.2 & 92.4 & 92.4 & 92.3 & 94.1 \\
& CAL Acc                       & 87.4 & 93.3 & 93.3 & 92.6 & 93.8 \\

\bottomrule
\end{tabular}}
\end{table*}

\subsection{Distillation}
\paragraph{The Models Used in FADRM+ and FADRM+$_{FD^{2}}$.}
We follow the FADRM+ setting and use four models for distillation. However, on fine-grained datasets, due to the poor performance of ShuffleNetV2+CAL, we use MobileNetV2, DenseNet121, ResNet18, and ResNet50 in both FADRM+ and FADRM+$_\mathrm{FD^{2}}$. For general datasets, we retain the original FADRM+ setting and use ShuffleNetV2, MobileNetV2, DenseNet121, and ResNet18 for distillation.
\paragraph{Image Initialization.}
Following the settings of SRe$^2$++ and FADRM+, SRe$^2$++$_\mathrm{FD^{2}}$ and FADRM+$_\mathrm{FD^{2}}$ also adopt $2\times2$ and $1\times1$ image initialization, respectively.
\paragraph{Group-wise Distillation with Group Size $N_S$.}
We optimize the IPC distilled images of each class in a group-wise manner, where each group contains at most $N_S$ images, and each iteration corresponds to one image. Therefore, the total number of groups for each class is
\[G=\left\lceil \frac{\mathrm{IPC}}{N_S} \right\rceil .\]
The first $G-1$ groups all run for $N_S$ iterations, while the last group runs for
\[N_{\mathrm{last}}=\mathrm{IPC}-(G-1)N_S\]
iterations. This strategy prevents the similarity constraint from introducing weakly discriminative regions into the distilled images while maintaining distillation efficiency.

\section{Addition ablation studies} \label{ablation}
\paragraph{Effect of the Group Size $N_S$ in the Similarity Constraint on the Distillation Time.}
We further compare the distillation time under different $N_S$ for the similarity constraint. Each group runs for $N_S$ iterations, and the reported time is the mean per-iteration cost within the group. As shown in \cref{tab:ns_time}, the time overhead increases with $N_S$ and reaches the highest value (1739.4 s) at $N_S=5$. This is because more previously distilled samples are involved in the similarity constraint as the iterations proceed. Considering accuracy, $N_S=4$ is preferred.

\paragraph{Effect of the FD$^2$-specific constraint term $\mathcal{L}_{\mathrm{cls}}$.}
Since FD$^2$ adjusts the class level semantic supervision during distillation by using both the backbone classifier and the CAL classifier, we further conduct an ablation study on $\mathcal{L}_{\mathrm{cls}}$. As shown in \cref{tab:l_cls}, removing $\mathcal{L}_{\mathrm{cls}}$ causes a clear performance drop, because the distilled samples lose class semantic supervision, which is crucial for recognition. Therefore, introducing this constraint term is necessary.

\noindent
\begin{minipage}[t]{0.49\textwidth}\vspace{0pt}
  \centering
  \small
  \captionsetup{type=table,hypcap=false,skip=2pt,justification=justified,singlelinecheck=false,width=\linewidth}
  \captionof{table}{Effect of $N_S$ in the similarity constraint on distillation time.}
  \label{tab:ns_time}
  \setlength{\tabcolsep}{2pt}
  \renewcommand{\arraystretch}{0.9}
  \begin{tabular}{c c c}
    \toprule
    Method & $N_S$ & Time Cost (s) \\
    \midrule
    \multirow{4}{*}{SRe$^2$L++$_{\mathrm{FD}^{\scriptscriptstyle 2}}$}
      & 2 & 1708.3 \\
      & 3 & 1709.0 \\
      & 4 & 1713.1 \\
      & 5 & 1739.4 \\
    \bottomrule
  \end{tabular}
\end{minipage}\hspace{0.02\textwidth}%
\begin{minipage}[t]{0.49\textwidth}\vspace{0pt}
  \centering
  \scriptsize
  \captionsetup{type=table,hypcap=false,skip=2pt,justification=justified,singlelinecheck=false,width=\linewidth}
  \captionof{table}{Effect of the FD$^2$-specific constraint term $\mathcal{L}_{\mathrm{cls}}$.}
  \label{tab:l_cls}
  \setlength{\tabcolsep}{0.5pt}
  \renewcommand{\arraystretch}{1.5}
  \resizebox{0.7\linewidth}{!}{%
  \begin{tabular}{c c c c}
    \toprule
    Method & IPC & $\mathcal{L}_{\mathrm{cls}}$ & Acc \\
    \midrule
    \multirow{2}{*}{SRe$^2$L++$_{\mathrm{FD}^{\scriptscriptstyle 2}}$}
      & \multirow{2}{*}{3} & w/  & 60.0 \\
      &                    & w/o & 57.2 \\
    \bottomrule
  \end{tabular}}
\end{minipage}

\section{More Visualizations} \label{visualizations}
At IPC=1, the fine-grained distilled samples obtained by FD$^2$ are shown in \cref{fig:cub_sre2l_fd2_ipc1_page1,fig:cub_sre2l_fd2_ipc1_page2,fig:cub_fadrm_fd2_ipc1_page1,fig:cub_fadrm_fd2_ipc1_page2} for CUB-200-2011, \cref{fig:aircraft_sre2l_fd2_ipc1_page1,fig:aircraft_fadrm_fd2_ipc1_page1} for FGVC-Aircraft, and \cref{fig:car_sre2l_fd2_ipc1_page1,fig:car_sre2l_fd2_ipc1_page2,fig:car_fadrm_fd2_ipc1_page1,fig:car_fadrm_fd2_ipc1_page2} for Stanford Cars.

\FloatBarrier
\clearpage

\begin{figure}[t]
  \centering
  \includegraphics[width=\linewidth]{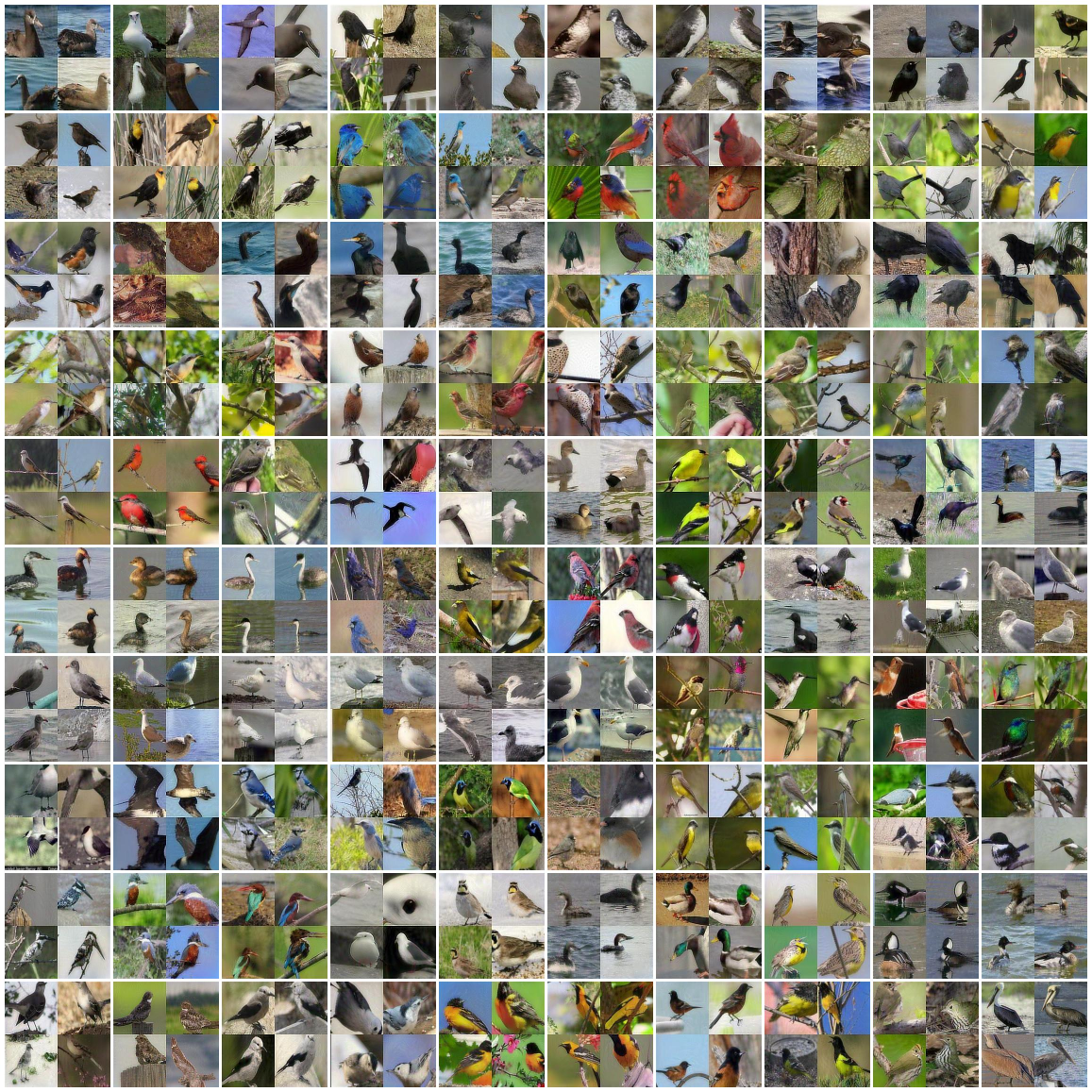}
  \caption{Visualization of distilled samples from the first 100 classes on CUB-200-2011 obtained by SRe$^2$++$_\mathrm{FD^{2}}$ at IPC$=1$.}
  \label{fig:cub_sre2l_fd2_ipc1_page1}
\end{figure}

\begin{figure}[p]
  \centering
  \includegraphics[width=\linewidth]{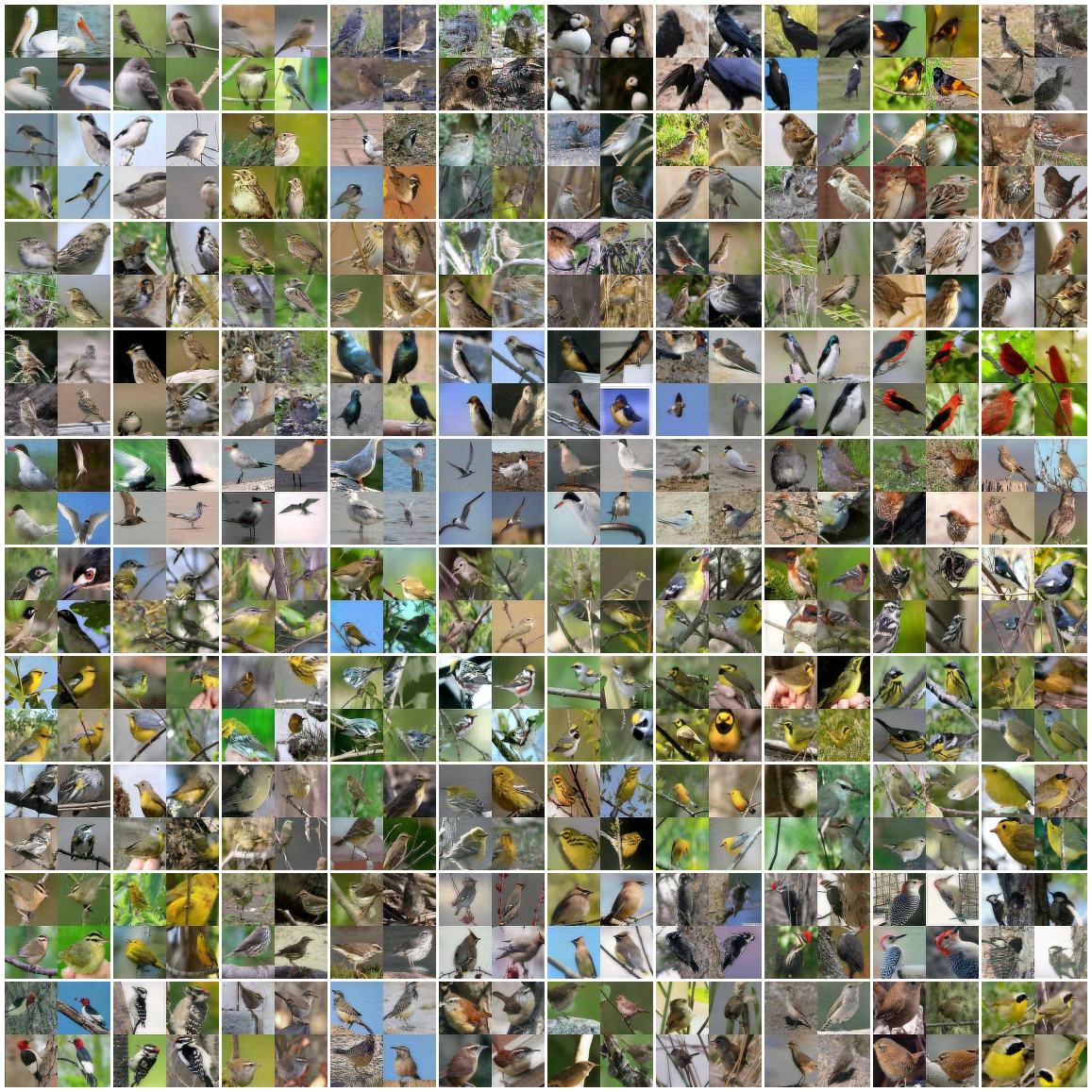}
  \caption{Visualization of distilled samples from the last 100 classes on CUB-200-2011 obtained by SRe$^2$++$_\mathrm{FD^{2}}$ at IPC$=1$.}
  \label{fig:cub_sre2l_fd2_ipc1_page2}
\end{figure}

\begin{figure}[p]
  \centering
  \includegraphics[width=\linewidth]{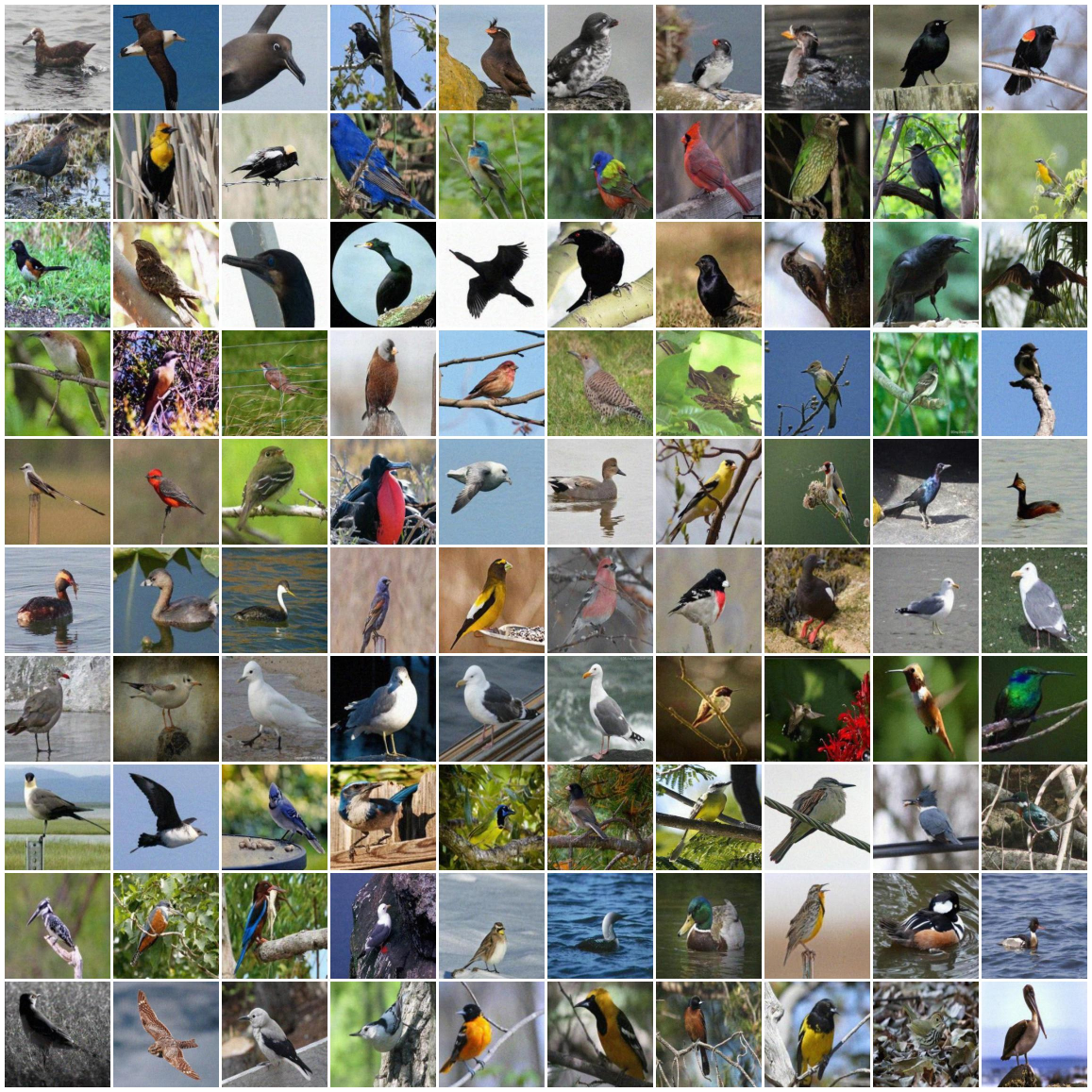}
  \caption{Visualization of distilled samples from the first 100 classes on CUB-200-2011 obtained by FADRM+$_\mathrm{FD^{2}}$ at IPC$=1$.}
  \label{fig:cub_fadrm_fd2_ipc1_page1}
\end{figure}

\begin{figure}[p]
  \centering
  \includegraphics[width=\linewidth]{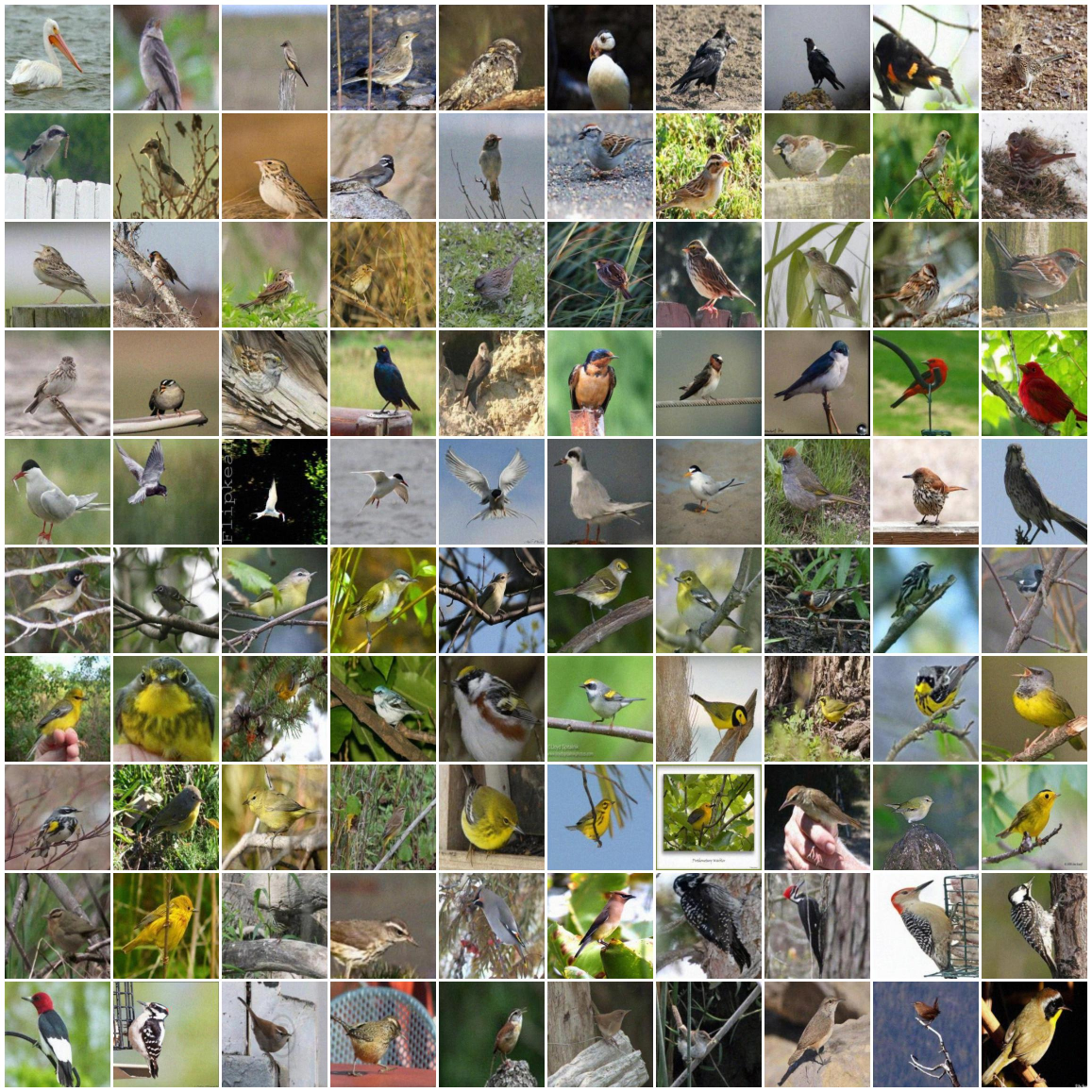}
  \caption{Visualization of distilled samples from the last 100 classes on CUB-200-2011 obtained by FADRM+$_\mathrm{FD^{2}}$ at IPC$=1$.}
  \label{fig:cub_fadrm_fd2_ipc1_page2}
\end{figure}

\begin{figure}[p]
  \centering
  \includegraphics[width=\linewidth]{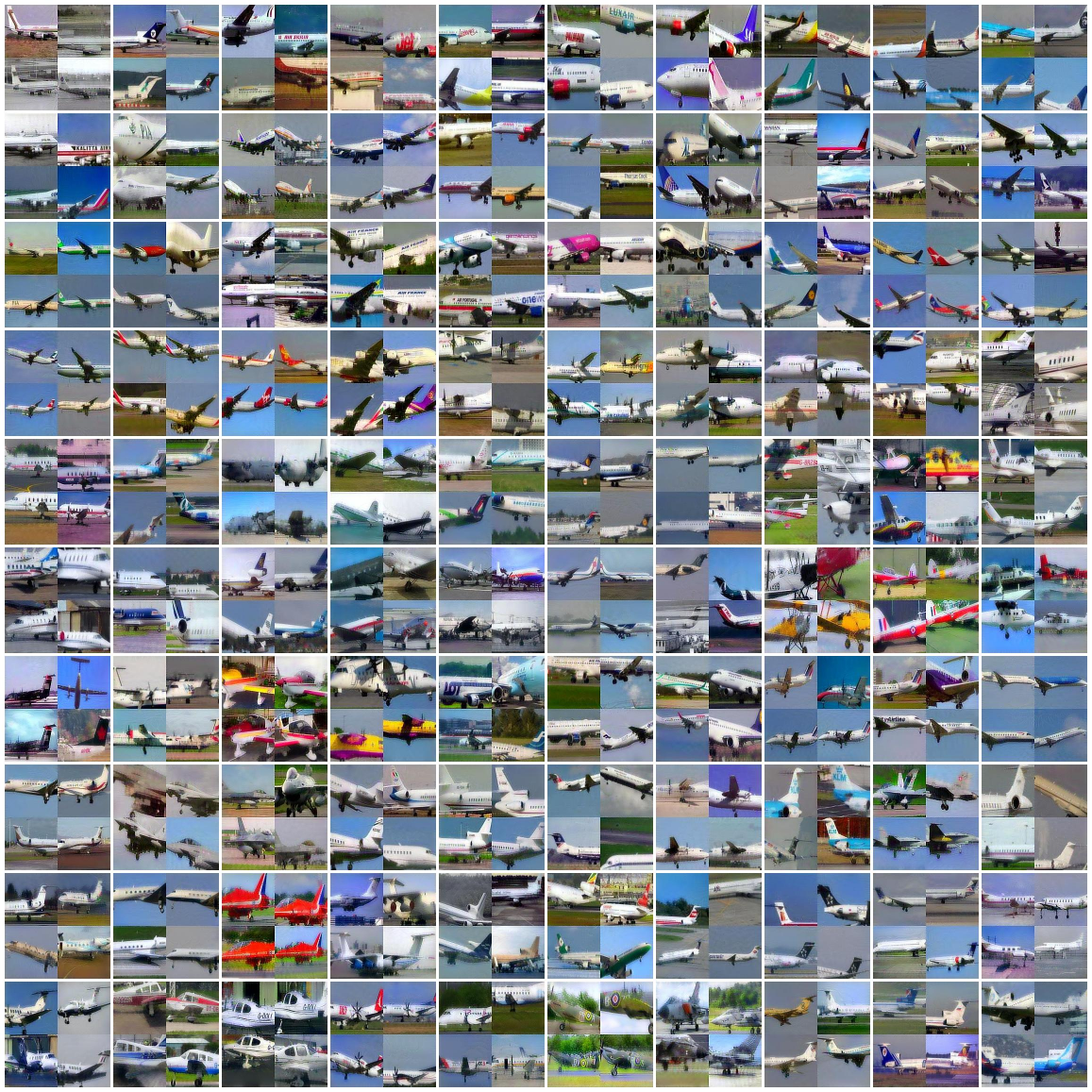}
  \caption{Visualization of distilled samples on FGVC-Aircraft obtained by SRe$^2$++$_\mathrm{FD^{2}}$ at IPC$=1$.}
  \label{fig:aircraft_sre2l_fd2_ipc1_page1}
\end{figure}

\begin{figure}[p]
  \centering
  \includegraphics[width=\linewidth]{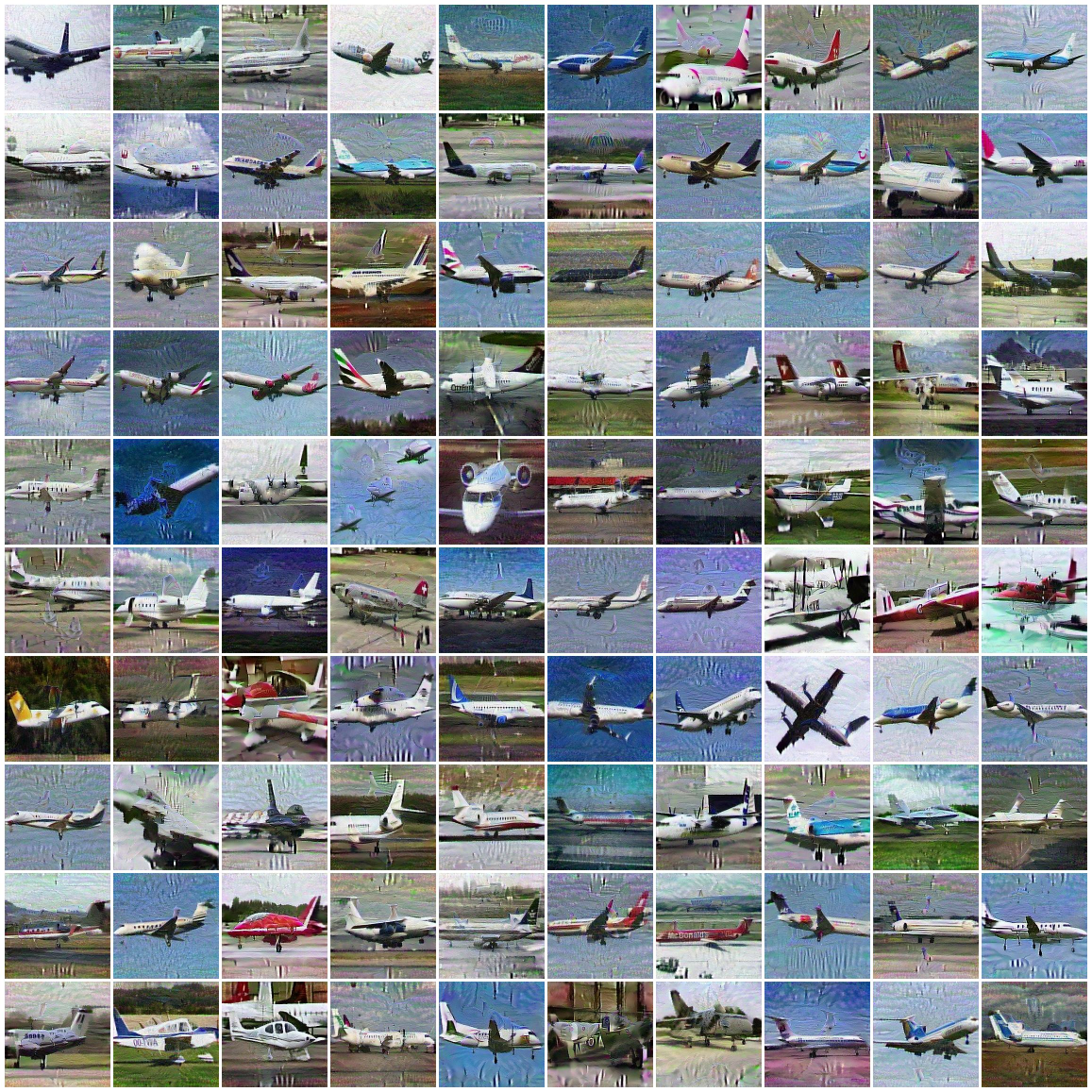}
  \caption{Visualization of distilled samples on FGVC-Aircraft obtained by FADRM+$_\mathrm{FD^{2}}$ at IPC$=1$.}
  \label{fig:aircraft_fadrm_fd2_ipc1_page1}
\end{figure}

\begin{figure}[p]
  \centering
  \includegraphics[width=\linewidth]{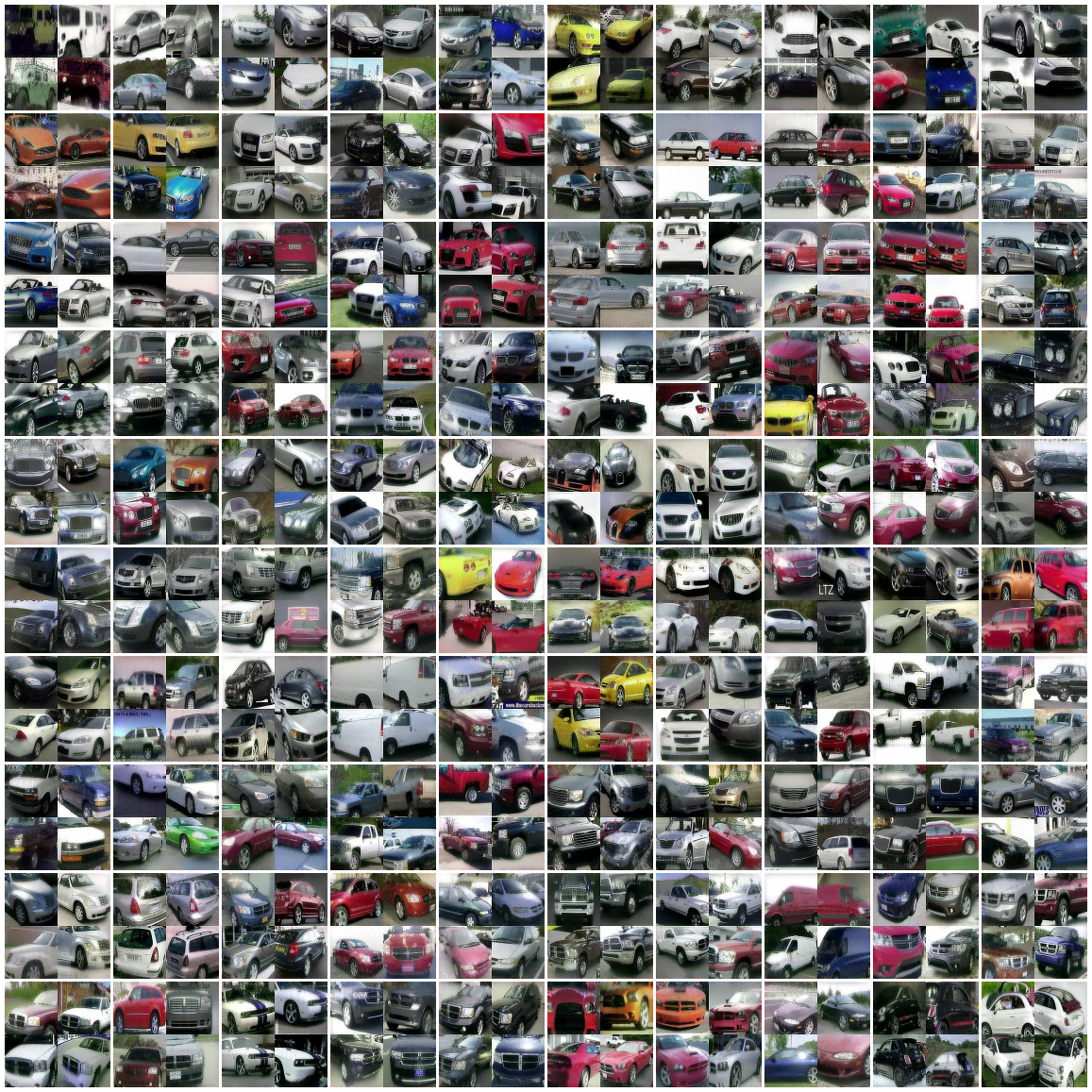}
  \caption{Visualization of distilled samples from the first 100 classes on Standford Cars obtained by SRe$^2$++$_\mathrm{FD^{2}}$ at IPC$=1$.}
  \label{fig:car_sre2l_fd2_ipc1_page1}
\end{figure}

\begin{figure}[p]
  \centering
  \includegraphics[width=\linewidth]{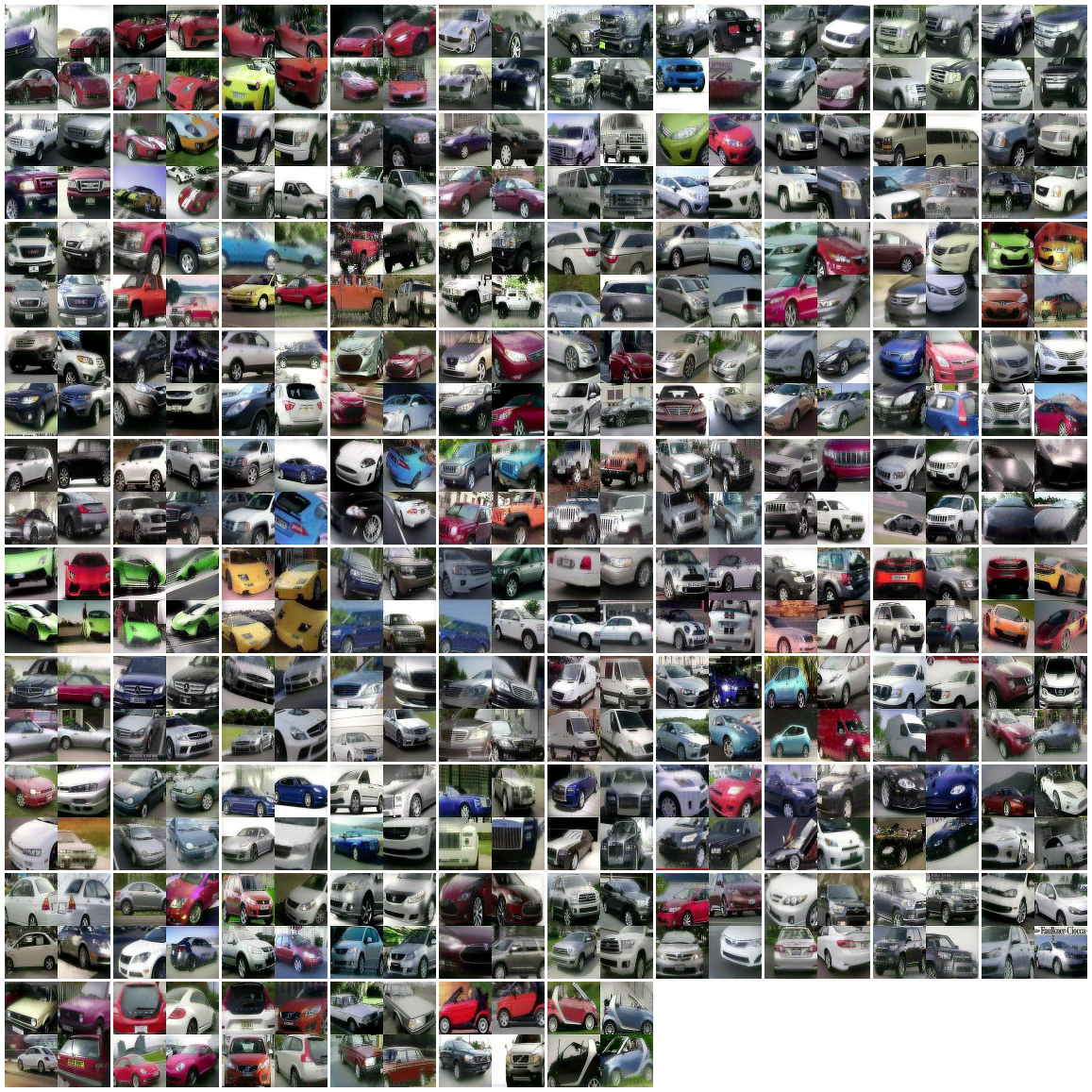}
  \caption{Visualization of distilled samples from the last 96 classes on Standford Cars obtained by SRe$^2$++$_\mathrm{FD^{2}}$ at IPC$=1$.}
  \label{fig:car_sre2l_fd2_ipc1_page2}
\end{figure}

\begin{figure}[p]
  \centering
  \includegraphics[width=\linewidth]{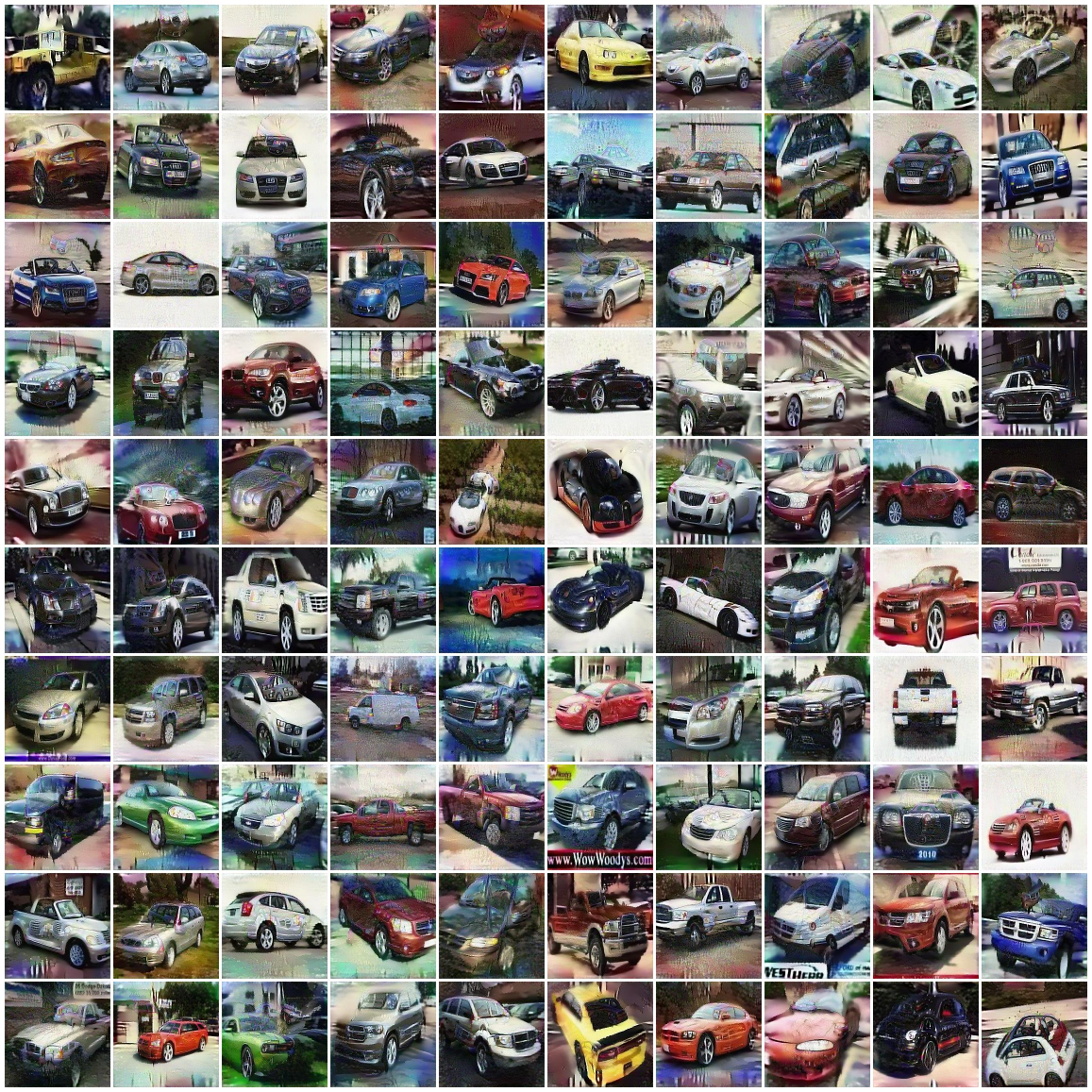}
  \caption{Visualization of distilled samples from the first 100 classes on Standford Cars obtained by FADRM+$_\mathrm{FD^{2}}$ at IPC$=1$.}
  \label{fig:car_fadrm_fd2_ipc1_page1}
\end{figure}

\begin{figure}[p]
  \centering
  \includegraphics[width=\linewidth]{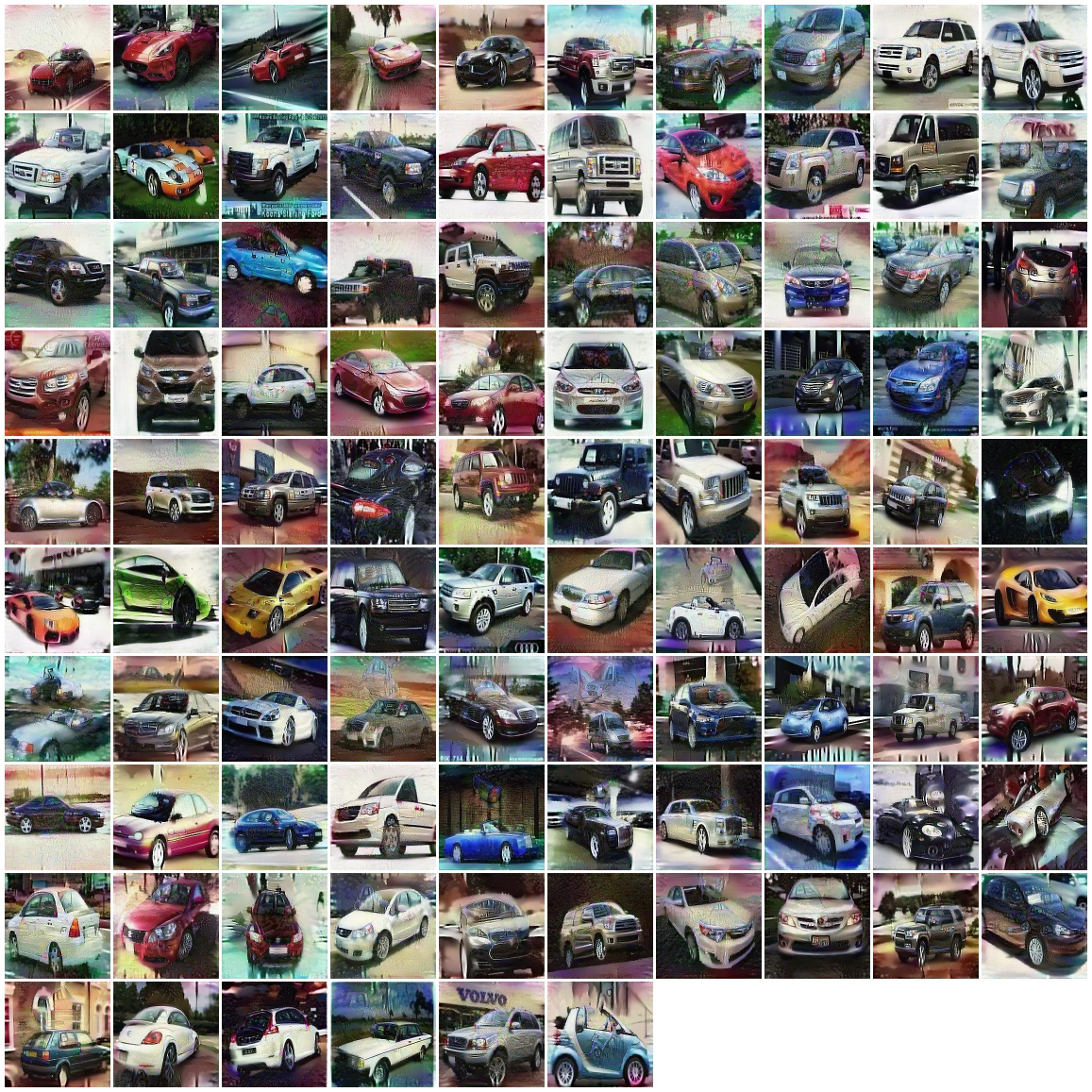}
  \caption{Visualization of distilled samples from the last 96 classes on Standford Cars obtained by FADRM+$_\mathrm{FD^{2}}$ at IPC$=1$.}
  \label{fig:car_fadrm_fd2_ipc1_page2}
\end{figure}

\end{document}